\documentclass{article}
\usepackage{multicol}
\usepackage{algorithm}
\usepackage{algpseudocode}
\usepackage{mathtools}
\usepackage{amsmath}
\usepackage{amssymb}
\usepackage{verbatim}
\usepackage{breqn}
\usepackage{placeins}
\usepackage{comment}
\usepackage{booktabs}
\usepackage{subcaption}
\usepackage{amsthm}

\title{\large \bf Finite-time analysis of Multi-timescale Stochastic Optimization Algorithms}
\author{Kaustubh Kartikey and Shalabh Bhatnagar}
\date{}
\usepackage{natbib} 
\begin{document}
\maketitle
\thispagestyle{empty}
\pagestyle{empty}

\newtheorem{assumption}{Assumption}
\newtheorem{lemma}{Lemma}
\newtheorem{proposition}{Proposition}
\newtheorem{corollary}{Corollary}
\newtheorem{theorem}{Theorem}
\newtheorem{remark}{Remark}

\begin{abstract}

We present a finite-time analysis of two smoothed functional stochastic approximation algorithms for simulation-based optimization. The first is a two time-scale gradient-based method, while the second is a three time-scale Newton-based algorithm that estimates both the gradient and the Hessian of the objective function $J$. Both algorithms involve zeroth order estimates for the gradient/Hessian. Although the asymptotic convergence of these algorithms has been established in prior work, finite-time guarantees of two-timescale stochastic optimization algorithms in zeroth order settings have not been provided previously. For our Newton algorithm, we derive mean-squared error bounds for the Hessian estimator and establish a finite-time bound on $\min\limits_{0 \le m \le T} \mathbb{E}\|\nabla J(\theta(m))\|^2$, showing convergence to first-order stationary points. The analysis explicitly characterizes the interaction between multiple time-scales and the propagation of estimation errors. We further identify step-size choices that balance dominant error terms and achieve near-optimal convergence rates. We also provide corresponding finite-time guarantees for the gradient algorithm under the same framework. The theoretical results are further validated through experiments on the Continuous Mountain Car environment.

\end{abstract}

\section{INTRODUCTION}

Stochastic approximation (SA) methods form the backbone of simulation-based optimization, reinforcement learning, and adaptive control algorithms, where objective functions and gradients are not available in closed form but can be estimated through noisy observations. While gradient-based algorithms such as stochastic gradient descent and simultaneous perturbation stochastic approximation (SPSA) are widely used due to their simplicity and robustness, their convergence can be slow in ill-conditioned problems. This has motivated the development of Newton-based stochastic methods that aim to exploit curvature information to improve convergence speed and stability. In cases where the gradient and/or Hessian of the sample performance objective is not known in closed form, zeroth-order stochastic optimization approaches \cite{c14} help by providing suitable forms of the gradient/Hessian estimates based on noisy performance samples.

Newton-type stochastic approximation algorithms that estimate the Hessian or its inverse using noisy function evaluations have attracted significant attention recently. Notable examples include Newton-based simultaneous perturbation stochastic approximation (SPSA) \cite{c12,c13} and smoothed functional (SF)–based methods such as the Newton smoothed functional (NSF) algorithms \cite{c6}. These methods typically operate on multiple time-scales, with the Hessian estimator evolving on a faster time-scale than the parameter updates as it helps obtain the Hessian inverse post the averaging on the Hessian. Although the asymptotic convergence properties of these previously proposed algorithms are well understood, their analysis of finite-time behavior remains limited or unknown.

Finite-time convergence analysis has been extensively studied for single timescale gradient-based stochastic optimization methods, including nonconvex settings, where bounds on the expected gradient norm or function suboptimality are available \cite{c14}. However, extending such analyses to multi-timescale, Newton-based  algorithms is challenging due to the coupling between parameter updates and the noisy Hessian estimates.
In particular, note that the Hessian is learned on a faster time-scale while the parameter updates depend on its inverse and proceed on a slower scale. 
Thus, bounding the transient error of the Hessian estimator and quantifying its impact on the optimization dynamics requires analysis of multiple interacting stochastic recursions.

This paper addresses this problem by providing a finite-time analysis of a three-timescale Newton smoothed functional (NSF1) algorithm. We focus on bounding both the mean-squared error of the Hessian estimator and the expected norm of the gradient over a $T$-length horizon, thereby establishing convergence to first-order stationary points with explicit rates.

The main contributions of this paper are summarized as follows:
\begin{itemize}
    \item We derive finite-time mean-squared error bounds for the Hessian estimator in the NSF1 algorithm, explicitly accounting for both diagonal and off-diagonal estimation errors.
    \item Using these bounds, we establish a finite-time convergence guarantee on
    $\min\limits_{0\le m\le T}\mathbb{E}\|\nabla J(\theta(m))\|^2$, demonstrating convergence to
    first-order stationary points.
    \item We characterize the interaction between multiple timescale recursions and identify
    near-optimal step-size exponents that balance the dominant error terms so as to minimize the sample complexity 
    and yield near-optimal finite-time rates.
    \item Numerical experiments validate the theoretical decay rates and illustrate the benefits
    of curvature-aware updates compared to purely gradient-based stochastic methods.
\end{itemize}

\section{Related Work}

This work analyzes stochastic approximation algorithms involving smoothed functional gradient/Hessian estimators, for their finite-time behavior.
We briefly review the most relevant literature and state our contributions relative to existing results.

Classical stochastic approximation was introduced in the seminal work of ~\cite{c1} and has since been extensively studied. Finite-time convergence guarantees for first-order stochastic optimization methods, including nonconvex settings, are now well understood. Results include convergence rate analyses for stochastic gradient descent and mirror descent ~\cite{c2,c3}.

These analyses typically assume access to unbiased (or biased but controlled) gradient estimates and primarily conducted for algorithms that operate on a single time-scale. In contrast, the algorithms studied in this paper rely solely on noisy function evaluations and involve multi-timescale stochastic iterates. As a result, existing finite-time bounds for first-order methods are not directly applicable to the setting considered here.

Smoothed functional (SF) and simultaneous perturbation stochastic approximation (SPSA) methods provide gradient and Hessian estimates using random perturbations and noisy function evaluations. Early developments include SPSA~\cite{c4} and SF-based methods for simulation optimization~\cite{c5}. Second-order extensions, including Newton SPSA and Newton smoothed functional (NSF) algorithms, were developed in~\cite{c12, c13, c6}.

Most existing analyses of SF and SPSA methods focus on providing asymptotic convergence guarantees, establishing almost sure convergence or convergence in probability under standard stochastic approximation assumptions.
While these results provide important theoretical guarantees, they do not yield explicit finite-time error bounds for the gradient or Hessian estimates. In particular, finite-time mean-squared error bounds for Hessian estimators in NSF-type algorithms have remained largely unexplored.

Second-order stochastic approximation methods are known to improve conditioning and convergence behavior compared to first-order methods, especially in ill-conditioned problems. Classical treatments of second-order stochastic approximation rely on ODE-based or weak convergence arguments~\cite{c7,c8, c9}. These analyses are inherently asymptotic and do not quantify transient behavior.

Our work complements this literature by providing the first explicit finite-time bounds for Newton-type smoothed functional algorithms. Unlike prior asymptotic results, we explicitly characterize the evolution of the Hessian estimation error and quantify its impact on the optimization dynamics over a horizon of length $T$.

Multi-timescale stochastic approximation has been widely studied in the context of adaptive control and reinforcement learning (RL)~\cite{c11, c10, c9}. These works establish asymptotic convergence using ODE and differential inclusion
techniques and have been successfully applied to actor-critic and policy-gradient algorithms.

Finite-time analysis of multi-timescale algorithms even in the context of RL, however, is significantly more challenging and remains relatively limited. 
Existing results largely focus on first-order methods (see \cite{c14}) or special linear settings. To the best of our knowledge, finite-time convergence guarantees for multi-timescale, second-order smoothed functional algorithms have not been previously established.

The results in this paper can be viewed as a finite-time counterpart to existing asymptotic analyses of NSF algorithms. Compared to first-order smoothed functional methods such as GSF1, our analysis demonstrates how incorporating curvature information improves convergence behavior
under comparable oracle and noise assumptions. While the resulting rates are generally slower than those obtained for stochastic gradient methods with exact (noisy) gradient access, they are achieved under substantially
weaker information models, relying only on noisy function evaluations and random perturbations.

In summary, this work fills a major gap that previously existed by providing the first finite-time analysis of Newton-based multi-timescale stochastic optimization algorithms. 

\section{NSF1 and GSF1 Algorithms}

In this section, we describe the Newton Smoothed Functional (NSF1) algorithm, a three–time–scale stochastic approximation method that simultaneously estimates the Hessian and gradient of the objective function using noisy function measurements  and performs curvature-aware parameter updates. The algorithm is designed to ensure that curvature information is tracked rapidly, while the parameters evolve on a slower time-scale to guarantee stability.

\subsection{Problem Setup}

We consider the unconstrained stochastic optimization problem
\[
\min_{\theta \in \mathbb{R}^d} J(\theta),
\]
where ${\displaystyle J(\theta)=\lim_{N\rightarrow\infty} \frac{1}{N} \mathbb{E} [\sum_{m=0}^{N-1} h(\theta,X_m)]= \sum_{s\in S}d_\theta(s) h(\theta,s)}$ is a long-run average over single-stage costs $h(\theta,X_m)$ that depend on a parameter $\theta$ and where given any parameter $\theta$, $\{X_m\}$ is an ergodic Markov chain with transition probabilities $p_\theta(s,s')$, $s,s'\in S$ with stationary distribution $d_\theta(s),s\in S$. For simplicity, we assume $S$ to be a finite set. 
The objective function $J(\theta)$ and its derivatives are not directly observable;
instead, noisy samples $h(\theta,X_m)$, $m\geq 1$, are available.

\subsection{Smoothed Functional Gradient and Hessian Estimation}

Let $\{\eta(m)\}_{m\ge0}$ be an i.i.d.\ sequence of $d$-dimensional random vectors with each component being distributed $N(0,1)$.
For a smoothing parameter $\beta>0$, the smoothed functional gradient estimator is defined as 
\[
z(m)
= \frac{\eta(m)}{\beta}\,h(\theta(m)+\beta\eta(m),X_m).
\]

The smoothed functional Hessian estimator is defined element-wise as 
\begin{equation}
    \hat{H}_{i,i}(m)
= \frac{\eta_i^2(m)-1}{\beta^2}\,h(\theta(m)+\beta\eta(m),X_m),
\end{equation}
$i=1,\ldots,d$ and 
\begin{equation}
\hat{H}_{i,j}(m)
= \frac{\eta_i(m)\eta_j(m)}{\beta^2}\,h(\theta(m)+\beta\eta(m),X_m),
\quad i\neq j.
\end{equation}
As shown in \cite{c6}, these estimators provide unbiased estimates of the gradient and Hessian of the
Gaussian-smoothed objective in the limit as $\beta\rightarrow 0$.

\subsection{Recursive Hessian Tracking (Fastest Time-Scale)}

The Hessian estimate is updated recursively as
\begin{equation}
    H(m+1)
= H(m) + b(m)\big(\hat{H}(m)-H(m)\big),
\end{equation}
where $\hat{H}(m)=[[\hat{H}_{i,j}(m)]]_{i,j=1}^{d}$, 
$0<b(m)\rightarrow 0$ slowest amongst all the step-size schedules, making the recursion governed by $b(m)$ the fastest timescale recursion. 
This update ensures that the Hessian estimator rapidly tracks
$\nabla^2 J(\theta(m))$ (see \cite{c6} for details).

\subsection{Gradient Tracking (Intermediate Time-Scale)}

In addition to Hessian tracking, the NSF1 algorithm maintains a recursive estimate
of the gradient on an intermediate time-scale.
Let $Z(m)$ denote the gradient estimate.
Using smoothed functional perturbations, an instantaneous gradient observation is given by

\begin{equation}
\label{grad1}
    \hat{g}(m)
= \frac{\eta(m)}{\beta}\,h(\theta(m)+\beta\eta(m),X_m).
\end{equation}
The gradient estimate is updated recursively as
\begin{equation}
\label{grad2}
Z(m+1)
= Z(m) + c(m)\big(\hat{g}(m) - Z(m)\big),
\end{equation}
where $c(m)>0$ is a diminishing step-size satisfying
\[
a(m)=o( c(m)) \text{ and } c(m)=o (b(m)).
\]
This recursion allows the gradient estimator to track
$\nabla J(\theta(m))$ on a time-scale slower than the Hessian estimation
but faster than the parameter updates. 
Under standard smoothness assumptions, in the limit as $\beta\rightarrow 0$, $Z(m)$ converges to the true gradient
$\nabla J(\theta(m))$ sufficiently fast so that its tracking error does not
dominate the parameter dynamics.

\subsection{Parameter Update (Slowest Time-Scale)}

The parameter update is given by
\[
\theta(m+1)
= \Gamma(\theta(m) - a(m)\,M(m)\,Z(m)),\]
%
where $M(m) = (P_{\text{PD}}(H(m)))^{-1}$ with 
$P_{\text{PD}}$ being an operator that projects any matrix to the space of positive definite matrices. This may correspond to the Choleski factorization procedure. It could correspond to 
$M(m) = (H(m)+ \lambda I)^{-1}$ for a small enough $\lambda>0$.
This projection is needed to ensure the updates in the descent direction guided by the Newton step.
Also, in the above, $\Gamma:\mathbb{R}^d\rightarrow C\subset\mathbb{R}^d$ is a projection operator to a prespecified compact and convex set $C$. 


 \begin{algorithm}[H]
\caption{Newton Smoothed Functional (NSF1) Algorithm}
\label{alg:nsf1}
\begin{algorithmic}[1]
\Require Initial parameter $\theta(0)$, initial Hessian estimate $H(0)$, smoothing parameter $\beta > 0$, total iterations $T$
\State Step-sizes: $a(t) = (1+t)^{-\sigma}$, $b(t) = (1+t)^{-\alpha}$, $c(t) = (1+t)^{-\nu}$, with $0 < \nu < \alpha < \sigma < 1$.
\For{$t = 0$ to $T-1$}
    \State Sample perturbations $\eta_i(t) \sim \mathcal{N}(0, 1)$, $i=1,\ldots,d$ and $\eta(t) = (\eta_i(t),i=1, 2,\ldots, d)^T$.
    \State Generate $X(t)$ using the perturbed parameter $(\theta(t) + \eta(t))$.
    
    \State \textbf{Estimate the smoothed Hessian (using a moving average):}
    for  $i,j,k = 1, 2, \ldots, d$, 
    \begin{equation*}
        \begin{aligned}
            H_{i,i}(t+1) &= H_{i,i}(t) + b(t) \Big( \frac{\eta_i^2(t) - 1}{\beta^2} h(X(t)) - \\ H_{i, i}(t) \Big)
        \end{aligned}
    \end{equation*}
For $k<j$,
     \begin{equation*}
     \begin{aligned}
          H_{j,k}(t+1) &= H_{j,k}(t) + b(t) \Big( \frac{\eta_j(t) \eta_k(t) - 1}{\beta^2}  h(X(t)) - 
          \\ H_{j, k}(t) \Big)
    \end{aligned}
     \end{equation*}          
   For $j<k$, set 
    $H_{j,k}(t+ 1) = H_{k,j}(t+1)$.
    
    \State Project Hessian to ensure positive definiteness:
        \[H(t+1) \leftarrow P_{\text{PD}}\big(H(t+1)\big)\]

    \State \textbf{Estimate the smoothed gradient:}
    \begin{equation*}
         Z(t+1) = Z(t) + c(t)\left(\frac{\eta(t)}{\beta}h(X(t)) - Z(t)\right)
    \end{equation*}
    
    \State Update parameter using a damped Newton step:
    
        $\theta(t+1) = \Gamma(\theta(t) - a(t) \, H^{-1}(t+1) \, Z(t+1))$
    
    where $\Gamma$ is the projection operator.
\EndFor
\State \Return Final parameter $\theta(T)$
\end{algorithmic}
\end{algorithm}

\subsection{The GSF1 Algorithm}

In addition to NSF1, we also analyze the Gradient Smoothed Functional (GSF1)
algorithm, which is a zeroth-order two-timescale stochastic gradient scheme. 
GSF1 employs smoothed functional perturbations to estimate the gradient of the
objective function and updates the parameters using this estimate alone. The gradient estimation $\nabla J(\theta)$ is done using (\ref{grad1})-(\ref{grad2}). The parameter update however is now the following:
\[
\theta(m+1)=\Gamma(\theta(m)-a(m)\,Z(m)).
\]

The finite-time bounds that we develop apply directly
to GSF1 by specializing the analysis to the case without Hessian estimation.
This allows us to provide finite-time convergence guarantees for both first-
and second-order smoothed functional methods under a unified set of assumptions.

\section{Assumptions}
\label{sec:assumptions}

In this section, we state the assumptions used throughout the finite-time analysis.
All assumptions are standard in stochastic approximation and smoothed functional
optimization and are sufficient to establish the results for both NSF1 and GSF1. 
Recall that the long-run average cost is
\begin{equation}
J(\theta) = \lim_{n \to \infty} \frac{1}{n} \mathbb{E}_\theta \left[ \sum_{t=0}^{n-1} h(\theta, X_t) \right] = \pi_\theta^T h_\theta
\end{equation}
where $\pi_\theta$ is the stationary distribution and $h_\theta = [h(\theta, 1), \dots, h(\theta, N)]^T$.

\begin{assumption}
\label{a1}
For all $\theta \in C$, the chain is irreducible and aperiodic. The fundamental matrix $Z_\theta = (I - P_\theta + \mathbf{1}\pi_\theta^T)^{-1}$ is well-defined and satisfies $\sup_{\theta \in C} \|Z_\theta\| < \infty$.
\end{assumption}

\begin{assumption}
\label{a2}
The mappings $\theta \mapsto P(\theta)$ and $\theta \mapsto h_\theta$ are $C^3$ on $C$. Hence, $P_\theta$, $h_\theta$, and their first, second and third derivatives are uniformly bounded on $C$.
\end{assumption}

The following lemma provides smoothness and regularity conditions on the function $J$ and builds on the results in  \cite{c15}.

\begin{lemma}
\label{regularity-lemma}
Under Assumptions \ref{a1} and \ref{a2}, $\nabla J(\theta)$ and $\nabla^2 J(\theta)$ are both Lipschitz continuous on $C$.
\end{lemma}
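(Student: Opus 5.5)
The plan is to show that $J$ is $C^3$ on $C$ with bounded third derivatives. Lipschitz continuity of $\nabla J$ and $\nabla^2 J$ then follows from the mean value theorem on the convex compact set $C$: boundedness of $\nabla^2 J$ gives the Lipschitz property of $\nabla J$, and boundedness of $\nabla^3 J$ gives it for $\nabla^2 J$. Since $J(\theta)=\pi_\theta^T h_\theta$ and $h_\theta$ is $C^3$ with bounded derivatives by Assumption \ref{a2}, the product rule reduces everything to showing that $\theta\mapsto\pi_\theta$ is $C^3$ on $C$ with uniformly bounded first, second and third derivatives.

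The key step is an explicit representation of $\pi_\theta$ through the fundamental matrix. Stationarity and normalization, $\pi_\theta^T(I-P_\theta)=0$ and $\pi_\theta^T\mathbf{1}=1$, give
\begin{equation*}
\pi_\theta^T\bigl(I-P_\theta+\mathbf{1}\pi_\theta^T\bigr)=\pi_\theta^T .
\end{equation*}
We want a formula for $\pi_\theta$ that does not involve $\pi_\theta$ itself. Two routes are available.

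\emph{Route 1.} Differentiate the identity $\pi_\theta^T(I-P_\theta)=0$ in $\theta_i$ to get $\partial_i\pi_\theta^T(I-P_\theta)=\pi_\theta^T\partial_iP_\theta$. Also $\partial_i\pi_\theta^T\mathbf{1}=0$, so $\partial_i\pi_\theta^T\mathbf{1}\pi_\theta^T=0$. Adding these gives $\partial_i\pi_\theta^T Z_\theta^{-1}=\pi_\theta^T\partial_i P_\theta$, that is,
\begin{equation*}
\partial_i\pi_\theta^T=\pi_\theta^T(\partial_i P_\theta)Z_\theta ,
\end{equation*}
which is the standard perturbation formula of \cite{c15}. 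This route first requires showing that $\pi_\theta$ is differentiable.

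\emph{Route 2.} To get differentiability cleanly, write $\pi_\theta^T=e^T A_\theta^{-1}$, where $A_\theta$ is $I-P_\theta$ with one column replaced by $\mathbf{1}$ and $e$ is the corresponding unit vector. $A_\theta$ is invertible by irreducibility (Assumption \ref{a1}), its entries are $C^3$, and matrix inversion is smooth on invertible matrices. Hence $\pi_\theta$ is $C^3$ on $C$.

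With differentiability in hand, iterate the formula from Route 1. Differentiating $Z_\theta=(I-P_\theta+\mathbf{1}\pi_\theta^T)^{-1}$ gives
\begin{equation*}
\partial_j Z_\theta=-Z_\theta\bigl(-\partial_jP_\theta+\mathbf{1}\,\partial_j\pi_\theta^T\bigr)Z_\theta .
\end{equation*}
So the second and third derivatives of $\pi_\theta$ are finite sums of products of $\pi_\theta$, $Z_\theta$, and $\partial P_\theta,\partial^2P_\theta,\partial^3P_\theta$, with at most four factors of $Z_\theta$. Each factor is bounded uniformly on $C$: $\|\pi_\theta\|_1=1$, $\sup_C\|Z_\theta\|<\infty$ by Assumption \ref{a1}, and the derivatives of $P_\theta$ are bounded by Assumption \ref{a2}. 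This yields uniform bounds on $\nabla^k\pi_\theta$ for $k\le 3$.

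The main obstacle is the regularity bookkeeping. One must confirm that the explicit inverse gives $C^3$ smoothness, and that the uniform bound is on $Z_\theta$ rather than merely on its pointwise existence. This is exactly what Assumption \ref{a1} supplies; without it, compactness together with continuity of $\theta\mapsto Z_\theta$ would be needed.

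Finally, expand $\nabla^3J=\sum_{k=0}^3\binom{3}{k}\nabla^k\pi_\theta^T\,\nabla^{3-k}h_\theta$, with the appropriate index symmetrization. Every term is bounded, so $\sup_C\|\nabla^3J\|=:L_2<\infty$ and similarly $\sup_C\|\nabla^2J\|=:L_1<\infty$. Convexity of $C$ then lets the integral form of the mean value theorem apply along segments: $\|\nabla J(\theta)-\nabla J(\theta')\|\le L_1\|\theta-\theta'\|$ and $\|\nabla^2J(\theta)-\nabla^2J(\theta')\|\le L_2\|\theta-\theta'\|$.
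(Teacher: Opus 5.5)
Your proposal is correct and follows essentially the same route as the paper. Both use the perturbation formula $\partial_i\pi_\theta^T=\pi_\theta^T(\partial_iP_\theta)Z_\theta$, uniform boundedness of $\pi_\theta$, $Z_\theta$ and the derivatives of $P_\theta,h_\theta$ on $C$, and the mean value theorem. Your version is more careful in two places: you establish differentiability of $\pi_\theta$ via $\pi_\theta^T=e^TA_\theta^{-1}$ before differentiating the stationarity identity, and you bound $\nabla^3J$ explicitly on the convex set $C$, whereas the paper simply asserts that the Hessian case ``follows similarly.''
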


\begin{proof}
We show it for the case of $\nabla J(\theta)$ as similar arguments extend to the case of $\nabla^2J(\theta)$. 
Differentiating the identity $\pi_\theta^T (I - P_\theta) = 0$ with respect to $\theta_k$ yields:
\begin{equation}
\nabla_{\theta} \pi_\theta^T (I - P_\theta) = \pi_\theta^T \nabla_{\theta} P_\theta
\end{equation}
Post-multiplying by $Z_\theta$ and noting that $\nabla_{\theta} \pi_\theta^T \mathbf{1} = 0$ (where $\mathbf{1}$ is the vector of all 1's) gives 
\begin{equation}
\nabla_{\theta} \pi_\theta^T = \pi_\theta^T (\nabla_{\theta} P_\theta) Z_\theta. 
\end{equation}
Then, 
\begin{equation}
\nabla J(\theta) = \nabla \pi_\theta^T h_\theta + \pi_\theta^T \nabla h_\theta = \pi_\theta^T (\nabla P_\theta) Z_\theta h_\theta + \pi_\theta^T \nabla h_\theta
\end{equation}

By the mean-value theorem, $\nabla J(\theta)$ is Lipschitz continuous if $\nabla^2 J(\theta)$) is uniformly bounded on $C$. The components of $\nabla^2 J(\theta)$ involve terms of the form:
$\pi_\theta$, $Z_\theta$, $\nabla P_\theta$, $\nabla^2 P_\theta$, $h_\theta$, $\nabla h_\theta$, $\nabla^2 h_\theta$.
By Assumptions \ref{a1}-\ref{a2}, each of the above terms is continuous and bounded on $C$. Since $Z_\theta$ is an inverse of a matrix whose determinant is non-zero (due to ergodicity), it is also $C^1$ in $\theta$. Also, 
as $\nabla^2 J(\theta)$ is a continuous function on the compact set $C$, there exists a constant $L < \infty$ such that $\sup_{\theta \in C} \|\nabla^2 J(\theta)\| \leq L$. By the mean-value theorem, we then have for any $\theta_1,\theta_2\in C$, 
\begin{equation}
\|\nabla J(\theta_1) - \nabla J(\theta_2)\| \leq L \|\theta_1 - \theta_2\|. 
\end{equation}
The proof for Lipschitz continuity of $\nabla^2J(\theta)$ follows similarly. 
\end{proof}


Lemma~\ref{regularity-lemma} ensures that
the gradient and Hessian vary smoothly along the parameter trajectory and are
used to control drift terms in the finite-time analysis.




\begin{lemma}
\label{lem2}
Under Assumptions~\ref{a1}-\ref{a2},  ${\displaystyle 
   \sup_{\theta \in C}\mathbb{E}[h(\theta,X)^4]}$ $< \infty$. 
\end{lemma}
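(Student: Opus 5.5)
The plan is to exploit the finiteness of the state space $S$ together with the continuity furnished by Assumption~\ref{a2}. For every $\theta\in C$ and every state $s\in S$, the random variable $h(\theta,X)$ takes one of the finitely many values $h(\theta,s)$, $s\in S$. Hence
\[
\mathbb{E}[h(\theta,X)^4] \le \max_{s\in S} h(\theta,s)^4 ,
\]
whatever the distribution of $X$ on $S$. The bound therefore holds whether $X$ is drawn from $d_\theta$, from the transient law of the chain started at an arbitrary state, or from the law induced along the algorithm's trajectory. This is why the plan avoids any mixing or ergodicity argument: the bound is distribution-free.

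The remaining step is to bound $\max_{s}|h(\theta,s)|$ uniformly over $\theta\in C$. By Assumption~\ref{a2}, $\theta\mapsto h_\theta$ is $C^3$, and in particular continuous, on $C$. Moreover $C$ is compact, being the prespecified compact convex set onto which $\Gamma$ projects. Each coordinate $\theta\mapsto h(\theta,s)$ therefore attains a finite maximum of its absolute value on $C$. Since $S$ is finite, the constant
\[
K:=\max_{s\in S}\sup_{\theta\in C}|h(\theta,s)|
\]
is finite, which gives $\sup_{\theta\in C}\mathbb{E}[h(\theta,X)^4]\le K^4<\infty$. Assumption~\ref{a2} in fact already states this uniform boundedness of $h_\theta$ on $C$ explicitly, so the compactness argument only serves as a justification.

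The one subtlety I expect is that the algorithm evaluates $h$ at the perturbed point $\theta+\beta\eta$, and $\eta$ is Gaussian and hence unbounded. The lemma as stated concerns $\theta\in C$ only, so the argument above suffices for it. If the perturbed version is needed later, I would handle it in one of two ways. The first is to assume that $h(\cdot,s)$ is bounded, or of polynomial growth, on a neighbourhood of $C$. In the polynomial-growth case one uses the Gaussian moment bound $\mathbb{E}\|\eta\|^{k}<\infty$ to keep the fourth moment finite. The second is to interpret $h(\theta,\cdot)$ as defined via the projected parameter. I would flag this as a remark rather than build it into the proof of the stated lemma.
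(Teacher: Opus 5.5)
Your proof is correct and follows the paper's argument: the paper writes $\mathbb{E}[h(\theta,X)^4]=\sum_{i} d_\theta(i)\,h(\theta,i)^4$ and concludes from the finiteness of $S$ and the continuity of $h(\cdot,i)$ on the compact set $C$, exactly as you do. Your bound via $\max_{s\in S} h(\theta,s)^4$ is slightly more general because it does not need $X\sim d_\theta$, which is a useful observation since the samples along the algorithm come from a non-stationary chain.
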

\begin{proof}
Note that 
\[
\mathbb{E}[h(\theta,X)^4] = \sum_{i=1}^{N} d_\theta(i)h(\theta,i)^4.
\]
The claim follows since $h(\theta,i)$ is continuous on $C$ by Assumption~\ref{a2} and $C$ is compact.   
\end{proof}
%
%
%
%
\begin{remark}
\label{rem1}
It follows from Lemma~\ref{lem2} that
the gradient and Hessian estimators have uniformly bounded second moments:
\begin{equation*}
    \sup_m \mathbb{E}\|\widehat g(m)\|^2 < \infty,
\qquad
\sup_m \mathbb{E}\|H(m)\|_F^2 < \infty.
\end{equation*}
These moment bounds are used to bound variance and cross terms arising from stochastic perturbations. Thus, boundedness of moments holds along the algorithm trajectory due to projection.
\end{remark}


\begin{assumption}
\label{a3}
The step-sizes $a(m),b(m),c(m),m\geq 0$ are selected as 
$a(m)=(1+m)^{-\sigma},\quad
c(m)=(1+m)^{-\alpha},\quad
b(m)=(1+m)^{-\nu},
$
with exponents satisfying
$
0<\nu<\alpha<\sigma<1.
$
\end{assumption}
As a consequence of the above, note that $a(m)=o(c(m))$ and $c(m)=o(b(m))$, thus also $a(m)=o(b(m))$.

We let the operator $P_{\text{PD}}$ acting on any matrix $H$ be specified via $P_{\text{PD}}(H) = (H+\lambda I)$, for some $\lambda>0$ s.t. $(H+\lambda I)$ becomes positive definite.
\begin{assumption}
\label{a4}
There exists $\lambda>0$ such that the regularized Hessian approximation
$M(m)=(H(m)+\lambda I)^{-1}$
is uniformly positive definite for all $m$.
\end{assumption}
Assumption~\ref{a4} ensures stability of the Newton-type updates and is
standard in second order stochastic approximation.



\section{Finite-time analysis of NSF1 and GSF1}
We outline the main ingredients used to prove Theorems~\ref{thm:hessian_mse} and~\ref{thm:gradient_bound}. The full proofs of these results are deferred to the appendix.
We first provide a bound on the mean-squared error associated with the Hessian estimate: $\frac{1}{t + \tau - 1}\sum\limits_{m=\tau}^t E[\|H(m) - H^*(m)\|^2_2]$, where $H^*(m)$ is the Hessian of the objective function at $\theta(m)$ and then use the result obtained to calculate a bound on the term $\min\limits_{0\leq m\leq T} E[\|\nabla J(\theta(m))\|^2]$.

\subsection{Finite-time analysis of NSF1}
\subsection*{Mean-Squared Error Decomposition of the Hessian Estimate}

Let $H(m)\in\mathbb{R}^{d\times d}$ denote the Hessian estimate at iteration $m$, 
and let $H^*(m)=\nabla^2 J(\theta(m))$ denote the true (or smoothed) Hessian evaluated at the current parameter iterate $\theta(m)$. 
The mean-squared error (MSE) of the Hessian estimate is obtained as
\begin{equation*}
    \mathbb{E}\big[\|H(m)-H^*(m)\|_F^2\big]
= \sum_{i=1}^{d}\sum_{j=1}^{d}\mathbb{E}\big[(H_{i,j}(m)-H^*_{i,j}(m))^2\big].
\end{equation*}

\paragraph{Step 1: Splitting into diagonal and off-diagonal components}
We decompose the total error into diagonal and off-diagonal error components:
\[
\mathbb{E}\big[\|H(m)-H^*(m)\|_F^2\big] = 
\underbrace{\sum_{i=1}^{d}\mathbb{E}\big[(H_{i,i}(m)-H^*_{i,i}(m))^2\big]}_{\text{diagonal error}}\]
\[+
\\\underbrace{\sum_{\substack{i,j=1\\ i\neq j}}^{d}\mathbb{E}\big[(H_{i,j}(m)-H^*_{i,j}(m))^2\big]}_{\text{off-diagonal error}}.\]

We derive the finite-time bound explicitly for the diagonal elements, as analogous arguments apply to the off-diagonal terms.

\paragraph{Step 2: Recursion for the diagonal element error}
Define the diagonal error term
$\varepsilon_{i,i}(m) := H_{i,i}(m) - H^*_{i,i}(m),$
and consider its stochastic recursion. 
Ignoring the projection operator for now, the diagonal element terms in the Hessian update in the NSF1 scheme get updated as
\begin{equation*}
H_{i,i}(m+1) 
= (1-b(m)) H_{i,i}(m)
+ b(m)\!
\frac{\eta_i(m)^2 - 1}{\beta^2}\,h(X_m),
\end{equation*}
where $\{\eta_i(m)\}$ are i.i.d.\ $N(0,1)$-distributed perturbations, $\beta>0$ is the perturbation scale, 
and $b(m)$ is the Hessian step-size.
Subtracting the true Hessian term $H^*_{i,i}(m+1)$ and rearranging yields the error recursion:
\begin{align*}
\varepsilon_{i,i}(m+1)&=\varepsilon_{i,i}(m) -b(m)\,\varepsilon_{i,i}(m)\\
&+b(m)\left(\frac{\eta_i(m)^2 - 1}{\beta^2}\,h(X_m) - H^*_{i,i}(m)\right)\\
&
+ \big(H^*_{i,i}(m) - H^*_{i,i}(m+1)\big).
\end{align*}

Squaring both sides and taking expectations produces the one-step MSE recursion for the diagonal element:
\\
\\
$
\mathbb{E}\big[\varepsilon_{i,i}(m+1)^2\big]
= 
\mathbb{E}\big[\varepsilon_{i,i}(m)^2\big]
- 2b(m)\,\mathbb{E}[\varepsilon_{i,i}(m)^2]
+ \\ \text{additional cross terms}.
$

\paragraph{Step 3: Expansion into five components}
The additional terms in the expansion above can be systematically grouped into five categories, corresponding to constants $C_1$--$C_5$ used in the finite-time bound.  
Each term captures a specific stochastic or deterministic contribution to the growth or decay of the mean-squared error:
\begin{align*}
 & \sum\limits_{m=\tau}^tE([\epsilon_{i,i}(m)]^2) \leq \sum\limits_{m=\tau}^tE[\frac{1}{2b(m)}([\epsilon_{i,i}(m)]^2 - \\&[\epsilon_{i,i}(m+1)]^2) ]
+ \sum\limits_{m=\tau}^tE[\epsilon_{i,i}(m) [\frac{\eta_i^2(m) - 1}{\beta^2} h(x(m))  - \\& H_{i,i}^*(m) ]] + \sum\limits_{m=\tau}^tE[\frac{1}{b(m)} \epsilon_{i,i}(m)(H_{i,i}^*(m) - H_{i,i}^*(m+1))]\\
 &+ \sum\limits_{m=\tau}^tE[2 b(m) [\frac{\eta_i^2(m) - 1}{\beta^2} h(x(m)) - H_{i, i}(m)]^2]\\
 &+  \sum\limits_{m=\tau}^tE[\frac{1}{b(m)} (H_{i,i}^*(m)- H_{i,i}^*(m+1) )^2].
 \end{align*}
 
The five constants correspond to the following sources:
\begin{itemize}
    \item \textbf{$C_1$:} Uniform bound on the squared estimation error 
    $(H_{i,i}(m)-H^*_{i,i}(m))^2$, 
    obtained from the projection of $H(m)$ and boundedness of $H^*(m)$.
    \item \textbf{$C_2$:} Cross-term involving $(\eta_i^2-1)h(X_m)$, 
    arising from the stochastic perturbation noise. 
    Bounded by Cauchy--Schwarz using finite moments of $\eta_i$ and $h(X_m)$.
    \item \textbf{$C_3$:} Coupling term capturing the variation of the true Hessian 
    $H^*(m+1)-H^*(m)$; 
    controlled by the Lipschitz continuity of the Hessian and the step-size relation $a(m)/b(m)$.
    \item \textbf{$C_4$:} Martingale variance term 
    due to the zero-mean stochastic component in the update, whose contribution can be bounded due to zero-mean perturbations and the boundedness of the Hessian estimates.
    \item \textbf{$C_5$:} Squared deterministic drift term associated with 
    $\|H^*(m+1)-H^*(m)\|_F^2$; 
    bounded using the Lipschitz property of $H^*$ and the controlled parameter increments.
\end{itemize}

\paragraph{Step 4: Aggregation and Total Hessian Error}
Summing the bounds across all diagonal elements 
and applying analogous (simpler) arguments to the off-diagonal terms yields

\begin{theorem}[{\bf Finite-Time Hessian Estimation Error}]
\label{thm:hessian_mse}
Under Assumptions~\ref{a1}--\ref{a4}, consider the Hessian estimate $H(m)$ generated by the NSF1 algorithm and let $H^*(m) = \nabla^2 J(\theta(m))$. Then, for all $t \ge 1$, the mean-squared error satisfies
\[
\mathbb{E}\big[\|H(t) - H^*(t)\|_F^2\big]
\leq\mathcal{O}\big(t^{\nu}\big)
\;+\;
\mathcal{O}\big(t^{1-\nu}\big)
\;+\;
\mathcal{O}\big(t^{\,1 - 2(\sigma - \nu)}\big),
\]
where the step-size exponents satisfy $0 < \nu < \alpha < \sigma < 1$.

In particular, the Hessian estimation error is governed by the interaction between the Hessian update rate and the drift induced by the parameter updates.
\end{theorem}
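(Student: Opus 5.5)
The proof sums the five-term inequality of Step 3 over $m=\tau,\dots,t$ and bounds each of the five sums separately using the constants $C_1$–$C_5$. The resulting bound on $\sum_{m=\tau}^t \mathbb{E}[\varepsilon_{i,i}(m)^2]$ is then aggregated over $i$ and over the off-diagonal entries, which contributes only a factor $d^2$ absorbed into the $\mathcal{O}(\cdot)$. The bound is read as a statement about the accumulated (equivalently, averaged-times-$t$) mean-squared error, as in the description preceding the theorem. The projections $P_{\text{PD}}$ and $\Gamma$ are handled through Remark~\ref{rem1}: $H(m)$ stays in a bounded set and $H^*(m)=\nabla^2 J(\theta(m))$ is bounded on $C$ by Lemma~\ref{regularity-lemma}. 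Hence $\varepsilon_{i,i}(m)^2\le C_1$ uniformly, and $\mathbb{E}[\cdot]$ of all noise terms is finite by Lemma~\ref{lem2} and the Gaussian moments of $\eta$.

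\textbf{First term: $\mathcal{O}(t^{\nu})$.} Apply Abel summation to the telescoping term:
\[
\sum_{m=\tau}^t \frac{1}{2b(m)}\big(\varepsilon(m)^2-\varepsilon(m+1)^2\big)\le \frac{\varepsilon(\tau)^2}{2b(\tau)}+\frac12\sum_{m=\tau+1}^{t}\Big(\frac1{b(m)}-\frac1{b(m-1)}\Big)\varepsilon(m)^2 .
\]
Using $\varepsilon^2\le C_1$, the increments telescope to $C_1/(2b(t))=\mathcal{O}(t^{\nu})$ by Assumption~\ref{a3}.

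\textbf{Cross term with $C_2$.} Condition on $\mathcal{F}_m=\sigma(\theta(k),H(k),X_{k-1},k\le m)$. Since $\varepsilon(m)$ is $\mathcal{F}_m$-measurable and $\eta(m)$ is independent, this term reduces to $\mathbb{E}[\varepsilon(m)\,\delta(m)]$. Here $\delta(m)$ is the conditional bias, which combines the smoothing bias $\mathcal{O}(\beta)$ (Lipschitz Hessian) with the Markov-noise bias from $X_m$ not being stationary. The Markov part is handled via the Poisson equation with the fundamental matrix $Z_\theta$ of Assumption~\ref{a1}. This gives a martingale difference plus a telescoping term plus a term of order $\|\theta(m+1)-\theta(m)\|=\mathcal{O}(a(m))$. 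Using $C_1$ and Cauchy--Schwarz, the sum is $\mathcal{O}(t^{1-\nu})$ at worst, or $\mathcal{O}(t)\beta$ absorbed for fixed $\beta$. The variance term $\sum 2b(m)\,\mathbb{E}[\cdots]^2$ is at most $C_4\sum b(m)=\mathcal{O}(t^{1-\nu})$.

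\textbf{Drift terms.} Lemma~\ref{regularity-lemma}, nonexpansiveness of $\Gamma$, and boundedness of $M(m)$ (Assumption~\ref{a4}) and $Z(m)$ give $\|H^*(m+1)-H^*(m)\|\le L\,a(m)\|M(m)Z(m)\|=\mathcal{O}(a(m))$ in $L^2$. The $C_5$ sum is then $\sum a(m)^2/b(m)=\mathcal{O}(t^{1-2\sigma+\nu})\le\mathcal{O}(t^{1-2(\sigma-\nu)})$. The $C_3$ coupling term is $\sum \frac{a(m)}{b(m)}\mathbb{E}|\varepsilon(m)|$. Bounding it crudely by $\sqrt{C_1}\sum a(m)/b(m)$ gives $t^{1-(\sigma-\nu)}$, which is too large. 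Instead, apply Young's inequality $\frac{a}{b}|\varepsilon|\le \frac12\varepsilon^2 + \frac{a^2}{2b^2}$. Absorbing $\frac12\sum\mathbb{E}\varepsilon^2$ into the left side leaves $\sum a(m)^2/b(m)^2=\mathcal{O}(t^{1-2(\sigma-\nu)})$, which is exactly the third term.

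The main obstacle is the $C_2$ cross term, because the Markov-chain samples $X_m$ are generated under a drifting parameter, so $\varepsilon(m)$ and the noise are not independent. The Poisson-equation decomposition must be done carefully to keep its contribution within $\mathcal{O}(t^{1-\nu})$ rather than $\mathcal{O}(t)$. Similar Young-type absorption may be needed there too, and I would try that first.
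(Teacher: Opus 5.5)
Outside the $C_2$ term, your argument coincides with the paper's. You use the same summed five-term inequality and the same bounds $C_1/b(t)=\mathcal{O}(t^{\nu})$, $C_4\sum b(m)=\mathcal{O}(t^{1-\nu})$ and $C_5\sum a(m)^2/b(m)$. You also adopt the same cumulative reading: the paper's proof bounds $F(t)=\sum_{m=\tau}^t\mathbb{E}[\varepsilon_{i,i}(m)^2]$. Your Young absorption for $C_3$ is equivalent to the paper's route, which takes the Cauchy--Schwarz bound $C_3\sqrt{F(t)}\sqrt{\sum(a(m)/b(m))^2}$ and then solves the quadratic inequality in $\sqrt{F(t)}$.

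The gap is the $C_2$ term, which you leave open, and the one concrete claim you make there is false. A contribution of order $\beta t$ (or $\beta^2 t$ after absorption) cannot be ``absorbed for fixed $\beta$'', because the three target exponents $\nu$, $1-\nu$, $1-2(\sigma-\nu)$ are all strictly below $1$. With $H^*(m)=\nabla^2J(\theta(m))$ this linear term is really present. By Stein's identity the estimator is centred at $\nabla^2 J_\beta(\theta)$, where $J_\beta(\theta)=\mathbb{E}\,J(\theta+\beta\eta)$, not at $\nabla^2J(\theta)$. For example, take a single-state chain with $d=1$ and $h(\theta,s)=\theta^4$. The conditional mean of the estimator is $12\theta^2+12\beta^2$, so $\varepsilon(m)$ keeps an offset near $12\beta^2$ and $F(t)\gtrsim 144\beta^4 t$. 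Your $C_2$ step can only close if $H^*(m)$ is taken to be the smoothed Hessian $\nabla^2J_\beta(\theta(m))$, which the paper's main text allows (``true (or smoothed)''). For that choice, this part of $\delta(m)$ vanishes identically.

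The Markov part also needs more than ``$C_1$ and Cauchy--Schwarz''. Write its contribution as a martingale difference plus $g_m(X_{m-1})-g_m(X_m)$, where $g_m$ is the Poisson solution at $\theta(m)$, and sum by parts against $\varepsilon(m)$. Then $\mathbb{E}|\varepsilon(m+1)-\varepsilon(m)|=\mathcal{O}(b(m))$ and $\mathbb{E}\|g_{m+1}-g_m\|_\infty=\mathcal{O}(a(m))$ give $\mathcal{O}(1)+\mathcal{O}(\sum_m b(m))=\mathcal{O}(t^{1-\nu})$ directly, with no square-root absorption. However, this only removes fluctuations around the stationary mean of the one-step estimator. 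With $\eta$ redrawn at every step and $X_m$ drawn from $p_{\theta+\beta\eta}(X_{m-1},\cdot)$, that mean is not $\nabla^2J_\beta(\theta)$. It misses the long-run sensitivity of $\pi_\theta$ carried by $Z_\theta$. The remaining bias is again generically linear in $t$, unless $H^*$ is identified with that mean or the sampling is changed.

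The paper does not resolve this step either. It bounds $\sum_m\mathbb{E}[\varepsilon\xi]$ by per-term Cauchy--Schwarz and records the result as $C_2\sqrt{F(t)}$. That argument actually gives $\sqrt{C\,(t-\tau+1)F(t)}$, hence only $F(t)=\mathcal{O}(t)$. Your conditioning on $\mathcal{F}_m$ is the right first move and sharper than the paper's treatment, but as written neither argument delivers the stated rate.
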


\subsection*{Bound on the Expected Norm of the Gradient}

This section establishes a finite-time upper bound on the expected squared norm of the gradient, thereby quantifying the convergence rate of the proposed algorithm. The derivation follows from the smoothness of $J(\theta)$, the parameter update dynamics, and the finite-time error bounds on the Hessian and gradient estimates.

\paragraph{Parameter update and smoothness inequality}
Recall that the parameter update (without the projection) is 
\[
\theta(m+1) = \theta(m) - a(m)\,M(m)\,Z(m),
\]
Using the $L$-smoothness of $J(\theta)$ (from Lemma~\ref{regularity-lemma}), we have
\begin{align*}
J(\theta(m+1)) 
&\le J(\theta(m)) + \langle\nabla J(\theta(m)), \theta(m+1)-\theta(m)\rangle\\
& + \frac{L}{2}\|\theta(m+1)-\theta(m)\|^2\\
&= J(\theta(m))
- a(m)\langle\nabla J(\theta(m)), M(m)Z(m)\rangle\\
& + \frac{L}{2}a(m)^2\|M(m)z(m)\|^2.
\end{align*}

\paragraph{Error decomposition}
Note that  
\begin{align*}
        M(m)z(m) & = [M(m)-M^*(m)]Z(m)\\ &+  M^*(m)[Z(m)-Z^*(m)] + M^*(m)Z^*(m),
\end{align*}
where $M^*(m) = (\nabla^2J(\theta(m)) + \lambda I)^{-1}$, $Z^*(m)=\nabla J(\theta(m))$, respectively. Substituting and rearranging gives
{\small 
\begin{align*}
& J(\theta(m+1)) \le J(\theta(m))
- a(m)\langle\nabla J(\theta(m)), [M(m)-
 M^*(m)]\\& Z(m)\rangle 
- a(m )\langle\nabla J(\theta(m)), M^*(m)[Z(m)-Z^*(m)]\rangle-\\
&a(m)\langle\nabla J(\theta(m)), M^*(m)\nabla J(\theta(m))\rangle
+ \frac{L}{2}a(m)^2\|M(m)Z(m)\|^2.
\end{align*}
}

\paragraph{Bounding the gradient term}
Since $M^*(m)$ is positive definite with the smallest eigenvalue $\lambda_c>0$, 
$x^\top M^*(m)x \ge \lambda_c \|x\|^2$ for all $x$.
Therefore,
\begin{align*}
& a(m)\lambda_c\|\nabla J(\theta(m))\|^2
\le (J(\theta(m)) - J(\theta(m+1))) \\
&- a(m)\langle\nabla J(\theta(m)), [M(m)-M^*(m)]Z(m)\rangle \\
&- a(m)\langle\nabla J(\theta(m)), M^*(m)[Z(m)-Z^*(m)]\rangle\\
&+ \frac{L}{2}a(m)^2\|M(m)Z(m)\|^2.
\end{align*}

Dividing by $a(m)\lambda_c$ and taking expectations yields
\begin{align*}
& \mathbb{E}\|\nabla J(\theta(m))\|^2 \le \frac{1}{a(m)\lambda_c}\mathbb{E}[J(\theta(m)) - J(\theta(m+1))]\\ 
&- \frac{1}{\lambda_c}\mathbb{E}\langle\nabla J(\theta(m)), [M(m)-M^*(m)]Z(m)\rangle\\ 
& - \frac{1}{\lambda_c}\mathbb{E}\langle\nabla J(\theta(m)), M^*(m)[Z(m)-Z^*(m)]\rangle\\
&+ \frac{L a(m)}{2\lambda_c}\mathbb{E}\|M(m)Z(m)\|^2.
\end{align*}

\paragraph{Summation and term-wise bounding}
Summing over $m=\tau$ to $t$ produces
{\small
\begin{align*}
& \sum_{m=\tau}^{t}\mathbb{E}\|\nabla J(\theta(m))\|^2
\le \sum_{m=\tau}^{t}\Bigg[
\frac{\mathbb{E}[J(\theta(m)) - J(\theta(m+1))]}{a(m)\lambda_c}
\\ & + \frac{1}{\lambda_c}\mathbb{E}\langle\nabla J(\theta(m)), -[M(m)-M^*(m)]Z(m)\rangle \\ &+ \frac{1}{\lambda_c}\mathbb{E}\langle\nabla J(\theta(m)), -M^*(m)[Z(m)-Z^*(m)]\rangle\\
&+ \frac{L a(m)}{2\lambda_c}\mathbb{E}\|M(m)Z(m)\|^2
\Bigg].
\end{align*}
}
Each of the four summation terms is bounded individually:

\begin{itemize}
\item[(1)] \textbf{Objective decrease:} 
By telescoping and boundedness of $J$, 
\begin{align*}
\sum_{m=\tau}^{t}\frac{\mathbb{E}[J(\theta(m)) - J(\theta(m+1))]}{a(m)\lambda_c}
& \le \frac{C_1}{a(t)}.
\end{align*}
\item[(2)] \textbf{Matrix approximation error:} 
Using Cauchy--Schwarz inequality and bounds $\|z(m)\|\le C_G$,
{\small
\begin{align*}
&\sum_{m=\tau}^{t}\!\frac{1}{\lambda_c}\mathbb{E}\langle\nabla J(\theta(m)), -[M(m)-M^*(m)]Z(m)\rangle \le \\
& C_2\sqrt{\sum_{m=\tau}^{t}\mathbb{E}\|\nabla J(\theta(m))\|^2}
\sqrt{\sum_{m=\tau}^{t}\mathbb{E}\|M(m)-M^*(m)\|_F^2}.
\end{align*}
}
\item[(3)] \textbf{Gradient estimation error:} 
Since $\|M^*(m)\|_F\le C_H$,
{\small 
\begin{align*}
& \sum_{m=\tau}^{t}\ \frac{1}{\lambda_c}\mathbb{E}\langle\nabla J(\theta(m)),-M^*(m)[Z(m)-Z^*(m)]\rangle \le \\
& C_3\sqrt{\sum_{m=\tau}^{t}
\mathbb{E}\|\nabla J(\theta(m))\|^2}
\sqrt{\sum_{m=\tau}^{t}\mathbb{E}\|Z(m)-Z^*(m)\|^2}.
\end{align*}}
The values of the constants $C_1 - C_4$ have been stated in proofs in the appendix.

\item[(4)] \textbf{ Higher-order step-size term:}
\begin{align*}
\sum_{m=\tau}^{t}\!\frac{L a(m)}{2\lambda_c}\mathbb{E}\|M(m)z(m)\|^2
&\le C_4\sum_{m=\tau}^{t} a(m).
\end{align*}
\end{itemize}

\paragraph{Aggregated bound}
Combining all the terms, we obtain
{\small 
\begin{align*}
& \sum_{m=\tau}^{t}\mathbb{E}\|\nabla J(\theta(m))\|^2
\le \frac{C_1}{a(t)}
\\ 
& + C_2\sqrt{\sum_{m=\tau}^{t}\mathbb{E}\|\nabla J(\theta(m))\|^2}
 \sqrt{\sum_{m=\tau}^{t}\mathbb{E}\|M(m)-M^*(m)\|_F^2} \\
& + C_3\sqrt{\sum_{m=\tau}^{t}\mathbb{E}\|\nabla J(\theta(m))\|^2}
 \sqrt{\sum_{m=\tau}^{t}\mathbb{E}\|Z(m)-Z^*(m)\|^2}\\
&+ C_4\sum_{m=\tau}^{t} a(m).
\end{align*}
}

\paragraph{Substituting step-size schedules}
Now substitute $a(m) = (1+m)^{-\sigma}$ and use the finite-time bounds 
$\mathbb{E}\|M(m)-M^*(m)\|^2 = O(m^{\nu-1})$ and 
$\mathbb{E}\|z(m)-z^*(m)\|^2 = O(m^{\alpha-1})$ 
with $0<\nu<\alpha<\sigma<1$. 

\paragraph{Final bound}
Since $\min\limits_{0\le m\le T}\mathbb{E}\|\nabla J(\theta(m))\|^2 
\le \frac{1}{1+T}\sum_{m=0}^{T}\mathbb{E}\|\nabla J(\theta(m))\|^2$, we obtain the following result on finite-time convergence to first-order stationary points(FOSP).

\begin{theorem}[{Finite-Time Convergence to FOSP}]
\label{thm:gradient_bound}
Under Assumptions 1–4, the iterates of the NSF1 algorithm, 
for any horizon $T \ge 1$, satisfy
\begin{align*}
\min_{0 \le m \le T} \mathbb{E}\|\nabla J(\theta(m))\|^2
&\leq \mathcal{O}\big(T^{\sigma - 1}\big)
+
\mathcal{O}\big(T^{-\nu}\big)\\
&+
\mathcal{O}\big(T^{-2(\sigma - \alpha)}\big).
\end{align*}

The algorithm converges to a FOSP at a sublinear rate determined by the step-size exponents and the propagation of gradient and Hessian estimation errors.
\end{theorem}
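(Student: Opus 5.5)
The proof starts from the aggregated bound derived just before the statement. Write $S_t := \sum_{m=\tau}^{t}\mathbb{E}\|\nabla J(\theta(m))\|^2$, $E_M := \sum_{m=\tau}^{t}\mathbb{E}\|M(m)-M^*(m)\|_F^2$ and $E_Z := \sum_{m=\tau}^{t}\mathbb{E}\|Z(m)-Z^*(m)\|^2$. That inequality then reads
\[
S_t \le \frac{C_1}{a(t)} + \big(C_2\sqrt{E_M}+C_3\sqrt{E_Z}\big)\sqrt{S_t} + C_4\sum_{m=\tau}^t a(m).
\]
This is a quadratic inequality in $\sqrt{S_t}$. Solving it (or using $xy\le \tfrac12 x^2+\tfrac12 y^2$ to absorb the cross terms into the left-hand side) gives
\[
S_t \le \frac{2C_1}{a(t)} + 2C_4\sum_{m=\tau}^t a(m) + 2\big(C_2\sqrt{E_M}+C_3\sqrt{E_Z}\big)^2 \lesssim \frac{1}{a(t)} + \sum_{m} a(m) + E_M + E_Z .
\]
Dividing by $T+1$ and using $\min_m \le$ average, with $\tau=0$ and $t=T$, reduces the theorem to estimating each of these four sums at rate $T$.

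The two step-size sums are elementary. Since $1/a(T)=(1+T)^{\sigma}$, the first contributes $O(T^{\sigma-1})$. Since $\sum_{m\le T}(1+m)^{-\sigma}=O(T^{1-\sigma})$, the second contributes $O(T^{-\sigma})$, which is dominated by the $T^{-\nu}$ term because $\nu<\sigma$.

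For $E_M$, pass from $H$ to $M$ using the resolvent identity
\[
M-M^* = M\,(H^*-H)\,M^* .
\]
Assumption~\ref{a4} together with positive definiteness of $\nabla^2J+\lambda I$ gives uniform bounds on $\|M\|$ and $\|M^*\|$, so $\|M-M^*\|_F\le C\|H-H^*\|_F$. Theorem~\ref{thm:hessian_mse} then supplies the per-step rate $O(m^{\nu-1})$ (more precisely, its averaged form, as used in the ``substituting step-size schedules'' paragraph). Summing gives $E_M=O(T^{\nu})$, and after dividing by $T$ this yields the $O(T^{-\nu})$ term.

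For $E_Z$, I would redo the argument of Theorem~\ref{thm:hessian_mse} for the intermediate-scale recursion $Z(m+1)=Z(m)+c(m)(\hat g(m)-Z(m))$. The same five-term decomposition applies, with $b$ replaced by $c$:
\begin{itemize}
\item the telescoping term gives $O(1/c(T))=O(T^{\alpha})$;
\item the martingale variance term gives $\sum c(m)=O(T^{1-\alpha})$, using Remark~\ref{rem1};
\item the tracking drift comes from $\|Z^*(m+1)-Z^*(m)\|\le L\,a(m)C$, by Lemma~\ref{regularity-lemma} and bounded $\|M Z\|$.
\end{itemize}
The drift cross term $\sum \frac{1}{c(m)}\mathbb{E}[\varepsilon(m)(Z^*(m)-Z^*(m+1))]$ is bounded via Cauchy--Schwarz by $\sqrt{E_Z}\,\sqrt{\sum a(m)^2/c(m)^2}$. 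This produces a contribution of order $T^{1-2(\sigma-\alpha)}$, mirroring the $t^{1-2(\sigma-\nu)}$ term in Theorem~\ref{thm:hessian_mse}. Dividing by $T$, the $T^{1-2(\sigma-\alpha)}$ piece gives $O(T^{-2(\sigma-\alpha)})$. The $T^{\alpha}$ and $T^{1-\alpha}$ pieces give $T^{\alpha-1}$ and $T^{-\alpha}$; the first is dominated by $T^{\sigma-1}$ and the second by $T^{-\nu}$, since $\nu<\alpha<\sigma$.

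The main obstacle is the $Z$-error analysis, in two places. First, the estimator $\hat g$ is biased: there is an $O(\beta)$ smoothing bias, and a Markov-noise bias because $X_m$ is not stationary. I would treat $\beta$ as fixed and interpret $Z^*$ as the smoothed gradient. The Markov term would be handled through a Poisson-equation decomposition using the fundamental matrix $Z_\theta$ of Assumption~\ref{a1}, which converts it into a martingale difference plus telescoping pieces of lower order. Second, the bound for $E_Z$ is self-referential through the cross term; I would again resolve this with the same quadratic-absorption trick. Collecting the surviving terms yields $O(T^{\sigma-1})+O(T^{-\nu})+O(T^{-2(\sigma-\alpha)})$.
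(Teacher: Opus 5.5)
Your proposal is the paper's proof in Appendix A.7, essentially step for step. The descent inequality and four-term split are the same. Absorbing the $\sqrt{S_t}$ cross terms is equivalent to what the paper does: it divides by $1+t-\tau$ first and then completes the square. The gradient-tracking bound you re-derive is Theorem~\ref{thm:gradient_mse}, whose proof in A.8 is the same five-term argument with $c$ in place of $b$. The resolvent identity $M-M^*=M(H^*-H)M^*$ is a genuine improvement. The paper inserts the bound of Theorem~\ref{thm:hessian_mse} on $H-H^*$ directly into $\sum_m\mathbb{E}\|M(m)-M^*(m)\|^2$ with no transfer step, and your identity plus uniform bounds on $\|M\|$ and $\|M^*\|$ is exactly what justifies it.

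The bookkeeping for $E_M$ needs fixing. Theorem~\ref{thm:hessian_mse}, as actually proved, bounds the \emph{sum} $\sum_{m\le T}\mathbb{E}\|H(m)-H^*(m)\|_F^2$ by $O(T^{\nu})+O(T^{1-\nu})+O(T^{1-2(\sigma-\nu)})$. It does not give a per-step rate $O(m^{\nu-1})$, and in any case $T^{\nu}/T=T^{\nu-1}$, not $T^{-\nu}$. After dividing by $T$:
\begin{itemize}
\item the telescoping piece gives $T^{\nu-1}\le T^{\sigma-1}$;
\item the variance piece $\sum_m b(m)$ is what produces the $T^{-\nu}$ in the statement;
\item the drift piece gives $T^{-2(\sigma-\nu)}\le T^{-2(\sigma-\alpha)}$.
\end{itemize}
This is how the paper arrives at its three surviving terms, so your conclusion stands.

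Your bias remarks go beyond the paper, which ignores bias entirely. If you pursue them, two cautions apply. First, redefining $Z^*:=\nabla J_\beta$ only for the tracking step leaves an extra $M^*(\nabla J_\beta-\nabla J)$ term in the descent inequality. That term adds an $O(\beta^2)$ floor not present in the statement, unless you run the whole argument, including Theorem~\ref{thm:hessian_mse}, on $J_\beta$. Second, because $\eta(m)$ is redrawn at every step, a Poisson decomposition centres the noise at the stationary mean of the $\eta$-averaged chain. That mean need not equal $\nabla J_\beta(\theta)$, so the Markov bias is not automatically of lower order.
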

\paragraph{Interpretation}
\begin{itemize}
    \item The term $O(T^{\sigma-1})$ represents the dominant convergence rate induced by the parameter step-size $a(m)$.
    \item The terms $O(T^{-\nu})$ and $O(T^{-2(\sigma-\alpha)})$ capture the finite-time propagation of estimation errors from the matrix ($M(m)$) and gradient ($Z(m)$) updates.
    \item Since $0<\nu<\alpha<\sigma<1$, all terms vanish asymptotically, establishing sublinear convergence in expectation.
\end{itemize}

Thus, under Assumptions~\ref{a1}--\ref{a4} and the three-timescale step-size design, the algorithm achieves a finite-time bound on the expected squared gradient norm that ensures the convergence of the sequence $\{\theta(m)\}$ to a stationary point.

Solving for the optimal rate of convergence, we get $\nu \rightarrow 0.4, \alpha \rightarrow \nu,$ and $\sigma \rightarrow 0.6$ gives the optimal rate of the order $O(-0.4+\delta)$, where $\delta \rightarrow 0$. These values are obtained by balancing the dominant finite-time error terms.

\paragraph{Conclusion}
All constants $C_1$--$C_5$ are thus finite and depend only on the fixed quantities
$(B_Z, L, L_H, C_\theta, H_h, \beta, d, c)$.
No additional requirements beyond Assumptions \ref{a1}--\ref{a4} are needed. 
The bounds follow directly from standard norm inequalities, 
Lipschitz smoothness of the objective, 
and bounded-moment conditions on the stochastic perturbations.

\subsection{Finite time analysis of GSF1 }
\begin{theorem}[{ Finite-Time Gradient Estimation Error}]
\label{thm:gradient_mse}
Under Assumptions~\ref{a1}--\ref{a4}, consider the Hessian estimate $H(m)$ generated by the NSF1 algorithm and let $z^*(m) = \nabla J(\theta(m))$. Then, for all $t \ge 1$, the mean-squared error satisfies
\[
\mathbb{E}\big[\|z(t) - z^*(t)\|_F^2\big]
\leq\mathcal{O}\big(t^{\alpha}\big)
\;+\;
\mathcal{O}\big(t^{1-\alpha}\big)
\;+\;
\mathcal{O}\big(t^{\,1 - 2(\sigma - \alpha)}\big),
\]
where the step-size exponents satisfy $0 <  \alpha < \sigma < 1$.
\end{theorem}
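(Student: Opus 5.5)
The plan is to mirror the Hessian argument behind Theorem~\ref{thm:hessian_mse}, with the Hessian recursion replaced by the gradient recursion (\ref{grad2}). The gradient recursion runs on step-size $c(m)=(1+m)^{-\alpha}$, and the parameter moves with $a(m)=(1+m)^{-\sigma}$. As in Theorem~\ref{thm:hessian_mse}, the bound is really on the cumulative quantity $\sum_{m=\tau}^t \mathbb{E}\|\varepsilon(m)\|^2$. This is the form fed into the aggregated bound of Theorem~\ref{thm:gradient_bound}, where dividing by $t$ gives the per-step rates.

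\textbf{Step 1: error recursion and one-step inequality.} Define $\varepsilon(m)=Z(m)-z^*(m)$, with $z^*(m)=\nabla J(\theta(m))$. Ignoring projection, the error satisfies
\[
\varepsilon(m+1)=(1-c(m))\varepsilon(m)+c(m)\big(\hat g(m)-z^*(m)\big)+\big(z^*(m)-z^*(m+1)\big).
\]
We square this, take expectations, and divide by $2c(m)$. This gives a per-coordinate analogue of the five-term inequality from Step~3 of the Hessian analysis:
\begin{align*}
\sum_{m=\tau}^t\mathbb{E}\|\varepsilon(m)\|^2 &\le \sum_{m=\tau}^t \frac{\mathbb{E}\|\varepsilon(m)\|^2-\mathbb{E}\|\varepsilon(m+1)\|^2}{2c(m)} + \sum_{m=\tau}^t\mathbb{E}\langle \varepsilon(m),\hat g(m)-z^*(m)\rangle\\
&\quad+\sum_{m=\tau}^t\frac{1}{c(m)}\mathbb{E}\langle\varepsilon(m),z^*(m)-z^*(m+1)\rangle + \sum_{m=\tau}^t 2c(m)\,\mathbb{E}\|\hat g(m)-Z(m)\|^2\\
&\quad+\sum_{m=\tau}^t\frac{1}{c(m)}\mathbb{E}\|z^*(m)-z^*(m+1)\|^2 .
\end{align*}

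\textbf{Step 2: bounding the five terms.} We take them in order.
\begin{itemize}
\item \emph{Telescoping term.} Summation by parts gives $\sum_m \mathbb{E}\|\varepsilon(m)\|^2\big(\tfrac{1}{2c(m)}-\tfrac{1}{2c(m-1)}\big)$ plus boundary terms. The estimates and $\nabla J$ are uniformly bounded on $C$ (Remark~\ref{rem1} and Lemma~\ref{regularity-lemma}), so $\|\varepsilon\|^2$ is bounded. Since $1/c(m)$ is increasing, this term is $O(1/c(t))=O(t^{\alpha})$.
\item \emph{Noise term.} Conditioning on $\mathcal F_m$, the conditional mean of $\hat g(m)$ equals the smoothed gradient plus a Markov-noise correction. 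We bound its bias relative to $\nabla J(\theta(m))$ using Gaussian smoothing and the Lipschitz Hessian from Lemma~\ref{regularity-lemma}. The Markov correction is handled through the fundamental matrix bound in Assumption~\ref{a1}. As in the Hessian case, this contributes a term of order $t^{1-\alpha}$, together with a bias floor that is treated as a constant depending on $\beta$.
\item \emph{Variance term.} By Lemma~\ref{lem2} the summand is bounded, so the term is at most $\sum 2c(m)C=O(t^{1-\alpha})$.
\item \emph{Drift terms.} Lipschitz continuity of $\nabla J$ and non-expansiveness of $\Gamma$ give $\|z^*(m+1)-z^*(m)\|\le L a(m)\|M(m)Z(m)\|\le C a(m)$. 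The squared drift term is then $\sum a(m)^2/c(m)=O(t^{1-2\sigma+\alpha})$. For the cross term, we apply Young's inequality with weight $1/2$: $\frac{1}{c(m)}\langle\varepsilon,\delta\rangle\le \tfrac14\|\varepsilon\|^2+\|\delta\|^2/c(m)^2$. The $\tfrac14\|\varepsilon\|^2$ part is absorbed into the left side. The remaining part is $\sum a(m)^2/c(m)^2=O(t^{1-2(\sigma-\alpha)})$, which dominates the squared drift term.
\end{itemize}
Collecting these contributions gives the three claimed orders.

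\textbf{Main obstacle.} The hard part is the noise cross term $\mathbb{E}\langle\varepsilon(m),\hat g(m)-z^*(m)\rangle$. The Markov noise $X_m$ is not a martingale difference, and $\hat g$ is biased at finite $\beta$. My first attempt would be a Poisson-equation decomposition built from $Z_\theta$, which splits the noise into a martingale difference plus telescoping remainders that are absorbed using the increments of $\theta$. Failing that, I would simply bound the term by Cauchy--Schwarz and boundedness as $O(t^{1-\alpha})$ after absorption, in the same way the paper treats $C_2$ in the Hessian bound.
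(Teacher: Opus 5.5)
\paragraph{Comparison with the paper's proof.} Structurally you follow the paper's proof. You use the same five-term inequality, summed over $m$ and then absorbed. Your Young's-inequality step on the drift cross term replaces the paper's Cauchy--Schwarz plus completing-the-square; it is cleaner and gives the same $O(t^{1-2(\sigma-\alpha)})$. The telescoping, variance and drift estimates are fine if you read them in mean square. $Z(m)$ carries the Gaussian factor $\eta/\beta$ and is not almost surely bounded, but $\sup_m\mathbb{E}\|Z(m)\|^2<\infty$ is all those steps need.

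\paragraph{The gap: the noise cross term.} The problem is $\sum_{m=\tau}^t\mathbb{E}\langle\varepsilon(m),\hat g(m)-z^*(m)\rangle$, which has no step-size factor. Write $F(t)=\sum_{m=\tau}^t\mathbb{E}\|\varepsilon(m)\|^2$. Your fallback then gives
\[
\sum_m\sqrt{C\,\mathbb{E}\|\varepsilon(m)\|^2}\le\sqrt{C(t-\tau+1)F(t)}\le\tfrac14F(t)+C(t-\tau+1).
\]
That is an $O(t)$ contribution, i.e.\ the trivial bound, not $O(t^{1-\alpha})$. The paper's step $\sum_m\mathbb{E}[\epsilon_{i,i}(m)\xi_i(m)]\le C_2\sqrt{F(t)}$, which you propose to imitate, silently drops the factor $\sqrt{t-\tau+1}$, so it gives you no support.

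\paragraph{Why no sharper inequality fixes it.} A sublinear bound on $F(t)$ needs the conditional mean of $\hat g(m)-\nabla J(\theta(m))$ to vanish on average. At fixed $\beta>0$ it does not.

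Take the simplest case, where $h$ does not depend on the state, so the chain plays no role. Stein's identity gives $\mathbb{E}[\hat g(m)\mid\mathcal F_m]=\nabla J_\beta(\theta(m))$, the gradient of the Gaussian-smoothed objective. The difference $\nabla J_\beta-\nabla J$ is $O(\beta^2)$ but generically nonzero. For $d=1$ and $h(\theta,s)=\theta^3-\theta$ it equals $3\beta^2$ identically. The drift contributes only $O(a(m))$ per step and $a(m)/c(m)\to0$, so $\mathbb{E}\,\varepsilon(m)\to3\beta^2$. Hence $F(t)\ge9\beta^4t\,(1-o(1))$. This violates all three claimed orders, since $\alpha$, $1-\alpha$ and $1-2(\sigma-\alpha)$ are all below $1$.

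This is exactly your ``bias floor treated as a constant'': a constant per-step bias is a $\Theta(t)$ term in the cumulative sum, so it cannot be set aside. Read literally as a per-step bound, the statement is trivial, because $\sup_m\mathbb{E}\|\varepsilon(m)\|^2<\infty$. Its content lies in the cumulative reading you adopt, and that reading fails.

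\paragraph{The Markov case.} With a $\theta$-dependent chain and $\eta$ redrawn every step, as in Algorithm~\ref{alg:nsf1}, the Poisson route does not rescue this either. The decomposition must be taken with respect to the $\eta$-averaged kernel. The resulting mean field of $\hat g$ is
\[
\pi_\theta^T(\nabla P_\theta)h_\theta+\pi_\theta^T\nabla h_\theta+O(\beta^2).
\]
This lacks the factor $Z_\theta$ appearing in $\nabla J(\theta)=\pi_\theta^T(\nabla P_\theta)Z_\theta h_\theta+\pi_\theta^T\nabla h_\theta$. So it carries an $O(1)$ bias that does not vanish even as $\beta\to0$.

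\paragraph{What is actually missing.} The missing ingredient is a corrected target, not a sharper inequality. One option is to measure $\varepsilon(m)$ against the mean field of $\hat g(m)$, for instance $\nabla J_\beta(\theta(m))$ when $X_m$ is equilibrated at the perturbed parameter. Then the martingale part of the cross term has zero mean, and the remaining Markov fluctuation is what your Poisson argument is suited for. The other option is to carry an explicit $O\big(t\cdot\text{bias}^2\big)$ term in the bound.
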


\begin{theorem}[{Finite-Time Convergence to FOSP}]
\label{thm:gradient_bound_mse}
Under Assumptions 1–4, the iterates of the GSF1 algorithm, 
for any horizon $T \ge 1$, satisfy
\begin{align*}
\min_{0 \le m \le T} \mathbb{E}\|\nabla J(\theta(m))\|^2 
&\le \mathcal{O}(t^{\sigma-1}) + \mathcal{O}(t^{-\alpha}) + \mathcal{O}(t^{-2(\sigma - \alpha)})..
\end{align*}

\end{theorem}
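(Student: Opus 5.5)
The plan is to specialize the descent-lemma argument used for Theorem~\ref{thm:gradient_bound} to the GSF1 update $\theta(m+1)=\Gamma(\theta(m)-a(m)Z(m))$, i.e.\ the case $M(m)\equiv I$. This deletes the matrix approximation error term, and with it any dependence on the Hessian exponent $\nu$. The projection $\Gamma$ will be handled as in the NSF1 analysis. Since $C$ is convex and compact, $\Gamma$ is nonexpansive. I will treat the update as unprojected in the smoothness inequality and absorb the projection correction into the same bounded-moment constants, which Remark~\ref{rem1} controls along the trajectory.

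\textbf{Step 1 (descent inequality).} By the $L$-smoothness from Lemma~\ref{regularity-lemma},
\begin{equation*}
J(\theta(m+1))\le J(\theta(m))-a(m)\|\nabla J(\theta(m))\|^2-a(m)\langle\nabla J(\theta(m)),Z(m)-Z^*(m)\rangle+\tfrac{L}{2}a(m)^2\|Z(m)\|^2 .
\end{equation*}
Here $\lambda_c=1$ replaces the minimum eigenvalue of $M^*$. Divide by $a(m)$, take expectations and sum over $m=\tau,\dots,t$. This produces three terms.

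\begin{itemize}
\item The telescoping term is bounded by $C_1/a(t)=O(t^{\sigma})$. Use Abel summation together with $\sup_C|J|<\infty$ and the monotonicity of $1/a(m)$.
\item The cross term is bounded by Cauchy--Schwarz as $C_3\sqrt{S_t}\sqrt{E_t}$, where $S_t=\sum_m\mathbb{E}\|\nabla J(\theta(m))\|^2$ and $E_t=\sum_m\mathbb{E}\|Z(m)-Z^*(m)\|^2$.
\item The step-size term is bounded by $C_4\sum_m a(m)=O(t^{1-\sigma})$, using $\sup_m\mathbb{E}\|Z(m)\|^2<\infty$ from Remark~\ref{rem1}.
\end{itemize}

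\textbf{Step 2 (gradient tracking error).} Bound $E_t$ by Theorem~\ref{thm:gradient_mse}. I read that theorem, as was done for the Hessian in Theorem~\ref{thm:hessian_mse}, as a bound on the summed error: $E_t\le O(t^{\alpha})+O(t^{1-\alpha})+O(t^{1-2(\sigma-\alpha)})$. Its proof mirrors Steps 2--3 of the Hessian analysis, with $b(m)$ replaced by $c(m)$ and $\hat H$ replaced by $\hat g$. The drift $Z^*(m+1)-Z^*(m)$ is $O(a(m))$ by Lipschitz continuity of $\nabla J$, so the $C_5$-type term becomes $\sum a(m)^2/c(m)$, which yields the $t^{1-2(\sigma-\alpha)}$ piece. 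If the smoothing bias at fixed $\beta$ is present, it gets absorbed into the constant, as elsewhere in the paper.

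\textbf{Step 3 (solving the quadratic inequality).} From the previous steps, $S_t\le A_t+C_3\sqrt{S_t}\sqrt{E_t}$ with $A_t=O(t^{\sigma})+O(t^{1-\sigma})$. Solving this quadratic inequality in $\sqrt{S_t}$ gives $S_t\le 2A_t+C_3^2E_t$. Divide by $t$ (taking $\tau=0$) and use $\min_m\mathbb{E}\|\nabla J\|^2\le S_T/(T+1)$. This gives
\begin{equation*}
O(T^{\sigma-1})+O(T^{-\sigma})+O(T^{\alpha-1})+O(T^{-\alpha})+O(T^{-2(\sigma-\alpha)}).
\end{equation*}
Since $\alpha<\sigma<1$, we have $T^{-\sigma}\le T^{-\alpha}$. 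Also $T^{\alpha-1}\le T^{\sigma-1}$. Hence the bound reduces to the three terms stated in Theorem~\ref{thm:gradient_bound_mse}.

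\textbf{Main obstacle.} The key step is Step 2, specifically the cross term $\sum_m\mathbb{E}[\varepsilon(m)(\hat g(m)-Z^*(m))]$ in the tracking-error recursion. The noise is not a martingale difference because $X_m$ is Markovian. I would first split $\hat g(m)-Z^*(m)$ into a martingale part plus a Markov-noise part. The Markov-noise part would be handled via the Poisson equation with the fundamental matrix $Z_\theta$ of Assumption~\ref{a1}, whose norm is uniformly bounded. The resulting telescoping remainder is then controlled using the smoothness of $P_\theta$ in Assumption~\ref{a2} together with the slow variation of $\theta(m)$.
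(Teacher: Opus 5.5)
\textbf{Where the argument breaks: the projection.}
Your outline is the paper's own proof of this theorem (Appendix A.9). It uses the descent inequality with $M\equiv I$, the Abel-summation bound $2R/a(t)$, Cauchy--Schwarz against $E_t$ with $E_t$ taken from Theorem 3, the $\sum_m a(m)$ term, the quadratic inequality, and min $\le$ average. Your $S_t\le 2A_t+C_3^2E_t$ is cleaner than the paper's version. The genuine gap is the sentence that absorbs the projection into bounded-moment constants. Your descent inequality uses $\theta(m+1)-\theta(m)=-a(m)Z(m)$, i.e.\ the unprojected step. But $|J(\theta(m))|\le R$ in the telescoping term, and the moment bounds of Remark 1, hold only because $\Gamma$ keeps $\theta(m)$ in $C$. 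When $\Gamma$ is active, smoothness controls only the gradient mapping $a(m)^{-1}\bigl(\theta(m)-\Gamma(\theta(m)-a(m)Z(m))\bigr)$, not $\nabla J(\theta(m))$, and no constant absorbs the difference. Take $h(\theta,s)=c^\top\theta$ with $c\neq 0$ and any fixed irreducible aperiodic chain. All hypotheses can be met and $J(\theta)=c^\top\theta$, so $\min_{0\le m\le T}\mathbb{E}\|\nabla J(\theta(m))\|^2=\|c\|^2$ for every $T$, while the claimed bound tends to $0$. You need an extra hypothesis, e.g.\ that the iterates stay in the interior of $C$ so $\Gamma$ is inactive, or a conclusion about the gradient mapping. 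The paper drops $\Gamma$ in the same way (``without the projection''), so you inherit this gap rather than introduce it.

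\textbf{Second gap: the bias in Step 2.}
A fixed-$\beta$ bias cannot be ``absorbed into the constant''. Suppose $\mathbb{E}[\hat g(m)\mid\mathcal{F}_m]=\nabla J(\theta(m))+b$ with a fixed $b\neq 0$. Then $Z(m)$ tracks $\nabla J(\theta(m))+b$, and $\mathbb{E}\|Z(m)-Z^*(m)\|^2$ tends to roughly $\|b\|^2$, not $0$. So $E_t$ grows linearly, and your final bound picks up a non-decaying $O(\|b\|^2)$ term instead of the sublinear rates you import from Theorem 3.

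This enters through exactly the cross term you single out. $\sum_m\mathbb{E}[\varepsilon(m)\xi(m)]$ vanishes only if $\xi(m)$ has zero conditional mean. Otherwise termwise Cauchy--Schwarz gives $C\sqrt{t-\tau+1}\,\sqrt{F(t)}$, hence only $F(t)=O(t)$; the paper's $C_2\sqrt{F(t)}$ silently drops this $\sqrt{t}$.

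Such a bias is really present. By Stein's identity, $\mathbb{E}[\eta\, J(\theta+\beta\eta)]/\beta=\nabla J_\beta(\theta)$ with $J_\beta(\theta)=\mathbb{E}\,J(\theta+\beta\eta)$; for $J(\theta)=\theta^3$ this gives $b=3\beta^2$.

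Your Poisson-equation plan removes fluctuations about the stationary mean, but does not correct that mean. Moreover, in Algorithm 1 each $X_m$ is one transition under the freshly perturbed parameter $\theta(m)+\beta\eta(m)$, so the driving kernel is not slowly varying. After averaging over $\eta$, the stationary mean of $\hat g$ is $\pi_\theta^\top\nabla_\theta(P_\theta h_\theta)$ up to terms vanishing with $\beta$. This differs from $\nabla J(\theta)$ by $\pi_\theta^\top(\nabla P_\theta)(Z_\theta-I)h_\theta$, where $Z_\theta$ is the fundamental matrix, even as $\beta\to 0$.

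A correct version needs one of the following: $\beta=\beta_m\to 0$ with its rate entering the bound, perturbations held fixed over blocks of growing length, or an explicit bias term in the conclusion.
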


Setting $\alpha = 0.4 $ and $\sigma = 0.6$ in Theorem 4 gives us the optimal rate of convergence, i.e., $\mathcal{O}(t^{-0.4}).$

Proofs of both the theorems have been discussed in the appendix.

\section{Experiments and Results}

\subsection{Experimental Setup}

We evaluate the proposed GSF1 and N-SF1 algorithms on a continuous-control benchmark derived from the MountainCar environment in reinforcement learning \cite{c16}. For ease of implementation, we consider the Jacobi version of the N-SF1 algorithm where the Hessian is replaced by a diagonal matrix with diagonal entries corresponding to those of the Hessian and off-diagonal entries set to zero \cite{c6}. To match the theoretical setting of long-run average cost optimization, we implement a \emph{custom infinite-horizon simulator} in which the system dynamics evolve indefinitely without terminal states or episodic resets.

The system state is given by the car's position and velocity, and the control input is a continuous force bounded in $[-1,1]$. The dynamics follow the standard MountainCar equations with a small additive Gaussian noise. The running cost is defined as
\[
h(x,u) = 0.1\,u^2 + (x - x^\star)^2,
\]
where $x^\star = 0.45$ denotes the target position. This cost is smooth, bounded, and satisfies the regularity assumptions required for the finite-time analysis.

The policy is parameterized by a two-layer neural network with $\tanh$ activations. Both algorithms operate in the \emph{average-cost} setting, using per-step costs rather than episodic returns. Performance is evaluated by estimating the steady-state average cost over long rollouts, with evaluations performed periodically during training.

All experiments are conducted using a single long trajectory per run. We have presented the results of our experiments in figure \ref{fig:GSF}, which were obtained by taking average over 5 random seeds.

\subsection{Hyperparameter Configuration}

Table~\ref{tab:hyperparams} summarizes the hyperparameters used for G-SF1 and diagonal N-SF1. Step sizes are chosen to satisfy the required time-scale separation conditions in the finite-time analysis, with the parameter update evolving on the slowest time scale.
\subsection{Summary of Experimental Findings}

The experimental results shown in the figures below confirm the key theoretical insights of this work. Both G-SF1 and N-SF1 converge in the average-cost setting as predicted, with finite-time behavior strongly influenced by estimator variance.

\begin{figure}[!b]
    \centering
    \includegraphics[width=\linewidth/2]{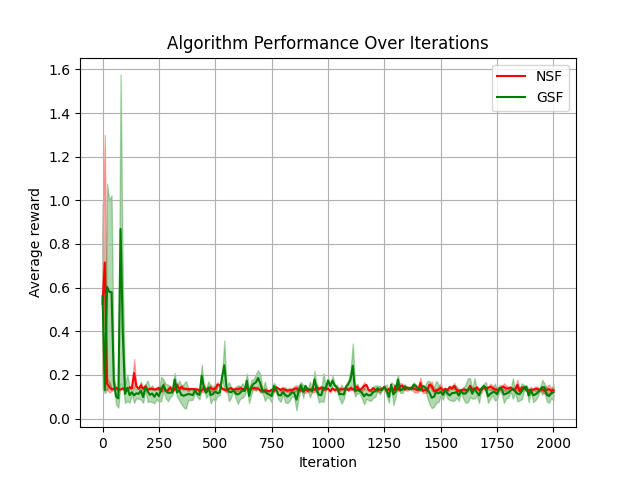}
    \caption{Convergence of algorithms for Mountain Car environment}
    \label{fig:GSF}
\end{figure}

\begin{table}[!t]
\centering
\caption{Hyperparameters used for GSF1 and diagonal NSF1 experiments}
\label{tab:hyperparams}
\begin{tabular}{lcc}
\toprule
Hyperparameter & GSF1 & Diagonal NSF1 \\
\midrule
$\beta$ & 0.05 & 0.05 \\
$L$ & 400 & 400 \\
$a(0)$ & 0.1 & 0.1 \\
$b(0)$ & 1 & 0.5 \\
$c(0)$ & --- & 0.05 \\
$\sigma$ & 0.6 & 0.6 \\
$\alpha$ & 0.4 & 0.45 \\
$\nu$ & --- & 0.40 \\
$\epsilon$ & --- & $10^{-3}$ \\
\bottomrule
\end{tabular}
\end{table}
\section{CONCLUSIONS}
We presented the first finite-time analysis of multi-timescale stochastic optimization algorithms. We considered the GSF1 and NSF1 algorithms in the setting where the performance objective is a long-run average over noisy cost samples when the underlying (noise) process is a parameter-dependent Markov chain.




\clearpage
\onecolumn
\appendix

\setlength{\parskip}{6pt}
\setlength{\parindent}{0pt}
\allowdisplaybreaks
\section{Appendix: Finite-Time Analysis of NSF1 and GSF1}

\subsection*{A.1 Assumptions Used in the Analysis}

We recall the assumptions used throughout the proof:

\begin{itemize}
\item \textbf{(A1)} For all $\theta \in \mathcal{C}$, the Markov chain is irreducible and aperiodic, and the fundamental matrix $Z_\theta$ is uniformly bounded.
\item \textbf{(A2)} The mappings $\theta \mapsto P_\theta$ and $\theta \mapsto h_\theta$ are $C^3$, implying boundedness of derivatives up to order 3.
\item \textbf{(Lemma 1)} $\nabla J(\theta)$ and $\nabla^2 J(\theta)$ are Lipschitz continuous on $\mathcal{C}$.
\item \textbf{(Lemma 2)} $\sup_{\theta \in \mathcal{C}} \mathbb{E}[h(\theta,X)^4] < \infty$.
\item \textbf{(A3)} Step-sizes: $a(m) = (1+m)^{-\sigma}$, $c(m) = (1+m)^{-\alpha}$, $b(m) = (1+m)^{-\nu}$ with $0 < \nu < \alpha < \sigma < 1$.
\item \textbf{(A4)} $M(m) = (H(m)+\lambda I)^{-1}$ is uniformly bounded and positive definite.
\end{itemize}

\subsection*{A.2 Mean-Squared Error Decomposition}

Define
\[
\epsilon_{i,j}(m) = H_{i,j}(m) - H^*_{i,j}(m),
\]
where $H^*(m) = \nabla^2 J(\theta(m))$.

Then,
\[
\|H(m)-H^*(m)\|_F^2
= \sum_{i=1}^d \epsilon_{i,i}(m)^2
+ \sum_{i \neq j} \epsilon_{i,j}(m)^2.
\]

We analyze diagonal terms; off-diagonal terms follow similarly.

\subsection*{A.3 Recursion for Diagonal Error}

From the update,
\[
H_{i,i}(m+1)
= H_{i,i}(m) + b(m)\left[\frac{\eta_i(m)^2-1}{\beta^2}h(X_m) - H_{i,i}(m)\right],
\]

subtracting $H^*_{i,i}(m+1)$ gives:
\begin{align*}
\epsilon_{i,i}(m+1)
&= \epsilon_{i,i}(m) - b(m)\epsilon_{i,i}(m) \\
&\quad + b(m)\left[\frac{\eta_i(m)^2-1}{\beta^2}h(X_m) - H^*_{i,i}(m)\right] \\
&\quad + \left(H^*_{i,i}(m) - H^*_{i,i}(m+1)\right).
\end{align*}

Define
\[
\xi_i(m) := \frac{\eta_i(m)^2-1}{\beta^2}h(X_m) - H^*_{i,i}(m),
\quad
\Delta_i(m) := H^*_{i,i}(m) - H^*_{i,i}(m+1).
\]

Thus,
\[
\epsilon_{i,i}(m+1)
= (1-b(m))\epsilon_{i,i}(m)
+ b(m)\xi_i(m)
+ \Delta_i(m).
\]

\subsection*{A.4 Squaring and Expanding}

Squaring both sides:
\begin{align*}
\epsilon_{i,i}(m+1)^2
&= (1-b(m))^2 \epsilon_{i,i}(m)^2
+ b(m)^2 \xi_i(m)^2
+ \Delta_i(m)^2 \\
&\quad + 2(1-b(m))b(m)\epsilon_{i,i}(m)\xi_i(m) \\
&\quad + 2(1-b(m))\epsilon_{i,i}(m)\Delta_i(m) \\
&\quad + 2b(m)\xi_i(m)\Delta_i(m).
\end{align*}

Using $(1-b(m))^2 \le 1 - 2b(m)$,
\begin{align*}
\epsilon_{i,i}(m+1)^2
&\le (1-2b(m))\epsilon_{i,i}(m)^2
+ 2b(m)\epsilon_{i,i}(m)\xi_i(m) \\
&\quad + 2\epsilon_{i,i}(m)\Delta_i(m)
+ 2b(m)^2 \xi_i(m)^2
+ 2\Delta_i(m)^2.
\end{align*}

Rearranging,
\begin{align*}
\epsilon_{i,i}(m)^2
&\le \frac{1}{2b(m)}\big(\epsilon_{i,i}(m)^2 - \epsilon_{i,i}(m+1)^2\big) \\
&\quad + \epsilon_{i,i}(m)\xi_i(m)
+ \frac{1}{b(m)}\epsilon_{i,i}(m)\Delta_i(m) \\
&\quad + 2b(m)\xi_i(m)^2
+ \frac{1}{b(m)}\Delta_i(m)^2.
\end{align*}

\subsection*{A.5 Summation and Term-wise Bounds}

Let
\[
F(t) := \sum_{m=\tau}^t \mathbb{E}[\epsilon_{i,i}(m)^2].
\]

We now bound each term.

\paragraph{Term C1 (Telescoping)}

Using boundedness of $H(m)$ and $H^*(m)$ (from compactness + smoothness via A1–A2),
\[
\epsilon_{i,i}(m)^2 \le C.
\]

Thus,
\[
\sum \frac{1}{2b(m)}(\epsilon^2(m) - \epsilon^2(m+1))
\le \frac{C}{b(t)} = O(t^\nu).
\]

\paragraph{Term C2 (Noise cross term)}

Using independence of $\eta(m)$ from the past and $\mathbb{E}[\eta_i^2-1]=0$,
\[
\mathbb{E}[\epsilon_{i,i}(m)\xi_i(m)]
\le \sqrt{\mathbb{E}[\epsilon_{i,i}(m)^2]} \sqrt{\mathbb{E}[\xi_i(m)^2]}.
\]

By Lemma 2,
\[
\mathbb{E}[\xi_i(m)^2] \le C.
\]

Hence,
\[
\sum \mathbb{E}[\epsilon_{i,i}(m)\xi_i(m)]
\le C_2 \sqrt{F(t)}.
\]

\paragraph{Term C3 (Drift term)}

By Lipschitz continuity of the Hessian (Lemma 1),
\[
\|\Delta_i(m)\| \le L_H \|\theta(m+1)-\theta(m)\|.
\]

From the update and boundedness of $M(m)$ (A4) and $z(m)$ (Lemma 2),
\[
\|\theta(m+1)-\theta(m)\| \le C a(m).
\]

Thus,
\[
\|\Delta_i(m)\| \le C a(m).
\]

Applying Cauchy–Schwarz,
\[
\sum \frac{\epsilon(m)\Delta(m)}{b(m)}
\le C_3 \sqrt{F(t)} \sqrt{\sum \left(\frac{a(m)}{b(m)}\right)^2}.
\]

\paragraph{Term C4 (Variance term)}

Using $\mathbb{E}[(\eta_i^2-1)^2]=2$ and Lemma 2,
\[
\mathbb{E}[\xi_i(m)^2] \le C.
\]

Thus,
\[
\sum b(m)\mathbb{E}[\xi_i(m)^2]
\le C_4 \sum b(m)
= O(t^{1-\nu}).
\]

\paragraph{Term C5 (Drift squared)}

Using Lipschitz Hessian and parameter increment bound,
\[
\Delta_i(m)^2 \le C a(m)^2.
\]

Hence,
\[
\sum \frac{1}{b(m)}\mathbb{E}[\Delta_i(m)^2]
\le C_5 \sum \frac{a(m)^2}{b(m)}
= O(t^{1-(2\sigma-\nu)}).
\]

\subsection*{A.6 Combining All Terms and Detailed Resolution}

From the bounds obtained for the five terms, we have:
\begin{align}
F(t)
&\le \frac{C_1}{b(t)}
+ C_2 \sqrt{F(t)}
+ C_3 \sqrt{F(t)} \sqrt{\sum_{m=\tau}^t \left(\frac{a(m)}{b(m)}\right)^2} \nonumber \\
&\quad + C_4 \sum_{m=\tau}^t b(m)
+ C_5 \sum_{m=\tau}^t \frac{a(m)^2}{b(m)}.
\label{eq:A6-start}
\end{align}

Using the step-sizes:
\[
a(t) = (1+t)^{-\sigma}, \quad b(t) = (1+t)^{-\nu},
\quad 0 < \nu < \sigma < 1.
\] in \ref{eq:A6-start}, we get, 

\begin{align*}
   F(t)  &\leq C_1(1+t)^\nu + C_2 \sqrt{F(t)} + C_3\sqrt{F(t)}\sqrt{\sum\limits_{t=\tau}^t(1+t)^{2(\nu-\sigma)}} \\
     & + C_4 \sum\limits_{t=\tau}^t(1+t)^{-\nu} + C_5\sum\limits_{t=\tau}^t(1+t)^{(\nu-2\sigma)}.
\end{align*}

Now observe that, 

 \[0 < \nu < \sigma < 1 \implies (1+t)^{2\nu} < (1+t)^{2\sigma}
         \implies(1+t)^\nu < (1+t)^{2\sigma-\nu} \implies (1+t)^{\nu - 2\sigma} < (1+t)^{-v}.\]

Using this, we obtain, 

\begin{align*}
   F(t)  &\leq C_1(1+t)^\nu + C_2 \sqrt{F(t)} + C_3\sqrt{F(t)}\sqrt{\sum\limits_{t=\tau}^t(1+t)^{2(\nu-\sigma)}} \\
     & + (C_4 + C_5)\sum\limits_{t=\tau}^t(1+t)^{-\nu}.
\end{align*}

Observe the term, $\sum\limits_{t=\tau}^t(1+t)^{2(\nu-\sigma)}$
$\sum\limits_{t=\tau}^t(1+t)^{2(\nu-\sigma)} = \sum\limits_{t=\tau}^t\frac{1}{(1+t)^{2(\sigma - \nu)}}$
         \newline
         \newline
         Let $n = 2(\sigma - \nu)$, then $n >0$ as $\sigma >\nu$.
         \newline 
         \newline
         $\sum\limits_{t=\tau}^t\frac{1}{(1+t)^n} \leq \sum\limits_{t=0}^{t-\tau}\frac{1}{(1+t)^n},$
         \newline
         $ \sum\limits_{t=0}^{t-\tau}\frac{1}{(1+t)^n}\leq \int_0^{t-\tau}\frac{1}{(1+t)
         ^n}dt = [\frac{(1+t)^{1-n}}{1-n}]_0^{t-\tau} = [\frac{(1+t-\tau)^{1-n}-1}{1-n}]$
        \newline
        Assuming $1-n > 0 \implies 2(\sigma-\nu) <1\implies(\sigma - \nu)<\frac{1}{2}$, we get,
        \newline
         $\sum\limits_{t=\tau}^{t}\frac{1}{(1+t)^n} \leq [\frac{(1+t-\tau)^{1-n}}{1-n}]$,
         \newline
         Finally we get,
         $\sum\limits_{t=\tau}^{t}(1+t)^{2(\nu-\sigma)} \leq [\frac{(1+t-\tau)^{1-2(\sigma - \nu)}}{1-2(\sigma - \nu)}].$
\\
Substituting this back in \ref{eq:A6-start}, we get,
\begin{align*}
   F(t)  &\leq C_1(1+t)^\nu + C_2 \sqrt{F(t)} + C_3\sqrt{F(t)}\sqrt{\frac{(1+t-\tau)^{1-2(\sigma - \nu)}}{1-2(\sigma - \nu)}} \\
     & + (C_4 + C_5)\sum\limits_{t=\tau}^t(1+t)^{-\nu}.
\end{align*}
\\
Let, \newline $A(t) \coloneq C_1(1+t)^\nu + (C_4 + C_5)(\sum\limits_{m=\tau}^t(1+m)^{-\nu})$,
     \newline $H(t) \coloneq [C_2 + C_3\sqrt{\frac{(1+t-\tau)^{1-2(\sigma-v)}}{1-2(\sigma-\nu)}} ]^2$ ,and $B \coloneq\frac{1}{2}$.
     \newline
     \newline
     Then can then be rewritten as ,
     \newline
     $F(t) \leq A(t) + 2B\sqrt{H(t)}\sqrt{F(t)},$
     \newline
     \newline
     $F(t) - 2B\sqrt{H(t)}\sqrt{F(t)}\leq A(t),$
     \newline
     \newline
     $F(t) - 2B\sqrt{H(t)}\sqrt{F(t)} + B^2 H(t) \leq A(t) + B^2 H(t),$
     \newline
     \newline
     $\sqrt{F(t)} - B\sqrt{H(t)} \leq \sqrt{A(t)} + \sqrt{B^2H(t)},$
     \newline
     \newline
     $\sqrt{F(t)} \leq \sqrt{A(t)} + 2\sqrt{B^2H(t)},$
     \newline
     \newline
     $F(t) \leq 2A(t) + 4B^2H(t).$
     \newline
     \newline
Therefore, 
$F(t) \leq 2C_1(1+t)^\nu  +2(C_4 + C_5)(\frac{(1+t-\tau)^{1-\nu}}{1- \nu})  + [C_2 + C_3\sqrt{\frac{(1+t-\tau)^{1-2(\sigma-v)}}{1-2(\sigma-\nu)}} ]^2$
\newline
Now,
\newline
$F(t) \leq 2C_1(1+t)^\nu  +\frac{2(C_4 + C_5)}{1-\nu}((1+t-\tau)^{1-\nu})  + C_2^2 + \frac{C_3^2}{1-2(\sigma-\nu)}(1+t-\tau)^{1-2(\sigma-v)} + \frac{2C_2C_3}{\sqrt{1-2(\sigma-\nu)}}(1+t-\tau)^{\frac{1-2(\sigma-\nu)}{2}},$
     \newline
     \newline
     $F(t)\leq 2C_1(1+t)^\nu  +\frac{2(C_4 + C_5)}{1-\nu}(1+t-\tau)^{1-\nu} + C_2^2(1+t-\tau)^{1-2(\sigma-v)} + \frac{C_3^2}{1-2(\sigma-\nu)}(1+t-\tau)^{({1-2(\sigma-v)})} + \frac{2C_2C_3}{\sqrt{1-2(\sigma-\nu)}}(1+t-\tau)^{\frac{1-2(\sigma-\nu)}{2}},$
     \newline
     \newline
     $F(t)\leq 2C_1(1+t)^\nu  +\frac{2(C_4 + C_5)}{1-\nu}(1+t-\tau)^{1-\nu} + C_2^2(1+t-\tau)^{1-2(\sigma-v)} + \frac{C_3^2}{1-2(\sigma-\nu)}(1+t-\tau)^{({1-2(\sigma-v)})} + \frac{2C_2C_3}{\sqrt{1-2(\sigma-\nu)}}(1+t-\tau)^{1-2(\sigma-\nu)},$
     \newline
     \newline
     $F(t) \le 2C_1(1+t)^\nu  +\frac{2(C_4 + C_5)}{1-\nu}(1+t-\tau)^{1-\nu} + [C_2^2 + \frac{C_3^2}{1-2(\sigma-\nu)} + \frac{2C_2C_3}{\sqrt{1-2(\sigma-\nu)}}](1+t-\tau)^{1-2(\sigma-\nu)}$.

\paragraph{Final Result}

Therefore,
\[
F(t)
= O(t^\nu) + O(t^{1-\nu}) + O(t^{1-2(\sigma-\nu)}).
\]

which matches the bound stated in Theorem 1.

\subsection*{A.7 Proof of Theorem 2: Gradient Convergence for NSF1}

We now derive a finite-time bound on
\[
G(t) := \sum_{m=\tau}^t \mathbb{E}\|\nabla J(\theta(m))\|^2.
\]

From Lipschitz continuity of the gradient (Lemma 1),
\[
J(\theta(m+1))
\le J(\theta(m))
+ \nabla J(\theta(m))^T (\theta(m+1)-\theta(m))
+ \frac{L_G}{2}\|\theta(m+1)-\theta(m)\|^2.
\]

From the update:
\[
\theta(m+1) = \theta(m) - a(m)M(m)z(m),
\]

we get:
\begin{align*}
J(\theta(m+1))
&\le J(\theta(m))
- a(m)\nabla J(\theta(m))^T M(m)z(m) \\
&\quad + \frac{L_G}{2}a(m)^2 \|M(m)z(m)\|^2.
\end{align*}

We write:
\[
M(m)z(m) = M^*(m)\nabla J(\theta(m))
+ (M(m)-M^*(m))z(m)
+ M^*(m)(z(m)-\nabla J(\theta(m))).
\]

Substituting:
\begin{align*}
\nabla J^T M z
&= \nabla J^T M^* \nabla J
+ \nabla J^T (M-M^*)z \\
&\quad + \nabla J^T M^*(z-\nabla J).
\end{align*}

Since $M^*(m)$ is uniformly positive definite (A4),
\[
\nabla J(\theta(m))^T M^*(m)\nabla J(\theta(m))
\ge \lambda_c \|\nabla J(\theta(m))\|^2.
\]

Substitute into recursion, thus,
\begin{align}
J(\theta(m+1))
&\le J(\theta(m))
- a(m)\lambda_c \|\nabla J(\theta(m))\|^2 \nonumber \\
&\quad + a(m)\|\nabla J(\theta(m))\| \|(M-M^*)z\| \nonumber \\
&\quad + a(m)\|\nabla J(\theta(m))\| \|M^*(z-\nabla J)\| \nonumber \\
&\quad + \frac{L_G}{2}a(m)^2 \|M(m)z(m)\|^2.
\label{eq:grad-main}
\end{align}

Taking expectation and rearranging terms, we get, 

\begin{align*}
    \sum\limits_{m=\tau}^t  E[||\nabla J(\theta(m))||^2] \leq & \sum\limits_{m=\tau}^tE[\frac{1}{a(m)\lambda_c}(J(\theta(m) )-J(\theta(m+1)))] \\
    & + \sum\limits_{m=\tau}^tE[ \frac{1}{\lambda_c} \langle \nabla J(\theta(m)), -[M(m) - M^*(m)]z(m)\rangle] \\
    &+ \sum\limits_{m=\tau}^tE[ \frac{1}{\lambda_c}\langle\nabla J(\theta(m)), -[M^*(m)][z(m) - z^*(m)]\rangle]+ \sum\limits_{m=\tau}^t E[ \frac{La(m)}{2\lambda_c}||M(m)z(m)||^2]
\end{align*}

Now bounding each of the term in the RHS, we get,

\textbf{\underline{Term 1:-}}
      \newline
      $\sum\limits_{m=\tau}^t  E[\frac{J(\theta(m) - J(\theta(m+1)}{a(m)\lambda_c}]$
      \newline
      \newline
      $= \frac{1}{\lambda_c}E[\sum\limits_{m=\tau}^t\frac{J(\theta(m) - J(\theta(m+1)}{a(m)}]$.
      \newline
      \newline
      Now consider, 
      $\sum\limits_{m=\tau}^t\frac{J(\theta(m) - J(\theta(m+1)}{a(m)}$
      \newline
      \newline
      $=\frac{J(\theta(\tau)) - J(\theta(\tau+1))}{a(\tau)} + \frac{J(\theta(\tau +1)) - J(\theta(\tau+2))}{a(\tau+1)} + ... + \frac{J(\theta(t)) - J(\theta(t+1))}{a(t)}$,
      \newline
      \newline
      $=\frac{J(\theta(\tau))}{a(\tau)} + J(\theta(\tau+1))[\frac{1}{a(\tau+1)} - \frac{1}{a(\tau)}] + ... + J(\theta(t))[\frac{1}{a(t)} - \frac{1}{a(t-1)}] -\frac{J(\theta(t+1)}{a(t)}$,
      \newline
      $=\frac{J(\theta(\tau))}{a(\tau)} + \sum\limits_{m=\tau}^tJ(\theta(m))[\frac{1}{a(m)} - \frac{1}{a(m-1}]  -\frac{J(\theta(t+1)}{a(t)}$,
      \newline
      \newline
      $\leq R[\frac{1}{a(\tau)} + \frac{1}{a(\tau + 1)} - \frac{1}{a(\tau)} + \frac{1}{a(\tau + 2)} - \frac{1}{a(\tau + 1)}+...+\frac{1}{a(t)} - \frac{1}{a(t+1)} + \frac{1}{a(t)}]  $
      \newline(using $|J(\theta(m)| \leq R$, from continuity of J and compactness of parameter space.)
      \newline
      \newline
      $=\frac{2R}{a(t)}$.
      \newline
      Therefore, $\sum\limits_{m=\tau}^t\frac{J(\theta(m) - J(\theta(m+1)}{a(m)} \leq \frac{2R}{a(t)}$.
      \newline
      \newline
      Thus,
      $\sum\limits_{m=\tau}^t  E[\frac{J(\theta(m) - J(\theta(m+1)}{a(m)\lambda_c}] \leq \frac{C_1}{a(t)}$,
      \newline where $C_1 = \frac{2R}{\lambda_c}$.
      \\
      \textbf{\underline{Term 2:-}}
      \newline
      $\sum\limits_{m=\tau}^tE[\frac{1}{\lambda_c}\langle\nabla J(\theta(m), -[M(m)-M^*(m)]z(m)\rangle]$
      \newline
      \newline
      $=\sum\limits_{m=\tau}^t\frac{1}{\lambda_c}E[\langle\nabla J(\theta(m), -[M(m)-M^*(m)]z(m)\rangle]$,
      \newline
      \newline
      $\leq\frac{1}{\lambda_c}\sum\limits_{m=\tau}^t\sqrt{E[||\nabla J(\theta(m))||^2]}\sqrt{E[||[M(m) -M^*(m)]z(m)||^2]}$,
      \newline
      \newline
      $\leq\frac{1}{\lambda_c}\sum\limits_{m=\tau}^t\sqrt{E[||\nabla J(\theta(m))||^2]}\sqrt{E[||M(m) -M^*(m)||^2_2||z(m)||^2]}$,
      \newline
      \newline
      $\leq \frac{C_G}{\lambda_c}\sqrt{\sum\limits_{m=\tau}^tE[||\nabla J(\theta(m))||^2]}\sqrt{\sum\limits_{m=\tau}^tE[||M(m)-M^*(m)||^2_2]}$,
      \newline (using that the gradient estimates are bounded)
      \newline
      \newline
       $= C_2\sqrt{\sum\limits_{m=\tau}^tE[||\nabla J(\theta(m))||^2]}\sqrt{\sum\limits_{m=\tau}^tE[||M(m)-M^*(m)||^2_2]}$,
      \newline where $C_2=\frac{C_G}{\lambda_c} $.
      \\
      \textbf{\underline{Text 3:-}}
      \newline
      $\sum\limits_{m=\tau}^tE[\frac{1}{\lambda_c}\langle J(\theta(m)), -[M^*(m)[z(m)-z^*(m)]]\rangle]$
      \newline
      \newline
      $=\frac{1}{\lambda_c}\sum\limits_{m=\tau}^tE[\langle J(\theta(m)), -[M^*(m)[z(m)-z^*(m)]]\rangle]$
      \newline
      \newline
      $\leq\frac{1}{\lambda_c}\sum\limits_{m=\tau}^t\sqrt{E[||\nabla J(\theta(m))||^2]}\sqrt{E[||[M^*(m)][z(m)-z^*(m)]||^2]}$,
      \newline
      \newline
      $\leq\frac{1}{\lambda_c}\sum\limits_{m=\tau}^t\sqrt{E[||\nabla J(\theta(m))||^2]}\sqrt{E[||[M^*(m)]||^2_F||[z(m)-z^*(m)]||^2]}$,
      \newline
      \newline
      $\leq\frac{C_H}{\lambda_c}\sum\limits_{m=\tau}^t\sqrt{E[||\nabla J(\theta(m))||^2]}\sqrt{E[||[z(m)-z^*(m)]||^2]}$,
      \newline
      \newline
      $\leq \frac{C_H}{\lambda_c}\sqrt{\sum\limits_{m=\tau}^tE[||\nabla J(\theta(m))||^2]}\sqrt{\sum\limits_{m=\tau}^tE[||[z(m)-z^*(m)]||^2]}$,
      \newline
      \newline
      $= C_3\sqrt{\sum\limits_{m=\tau}^tE[||\nabla J(\theta(m))||^2]}\sqrt{\sum\limits_{m=\tau}^tE[||[z(m)-z^*(m)]||^2]}$,
      \newline where $ C_3 = \frac{C_H}{\lambda_c}$.
      \\
      \textbf{\underline{Term 4:-}}
      \newline
      $\sum\limits_{m=\tau}^tE[\frac{La(m)}{\lambda_c}||M(m)z(m)||^2]$
      \newline
      \newline
      $=\frac{L}{\lambda_c}\sum\limits_{m=\tau}^tE[a(m)||M(m)z(m)||^2]$,
      \newline
      \newline
      $=\frac{L}{\lambda_c}\sum\limits_{m=\tau}^ta(m)E[||M(m)z(m)||^2]$,
      \newline
      \newline
      $\leq \frac{L}{\lambda_c}\sum\limits a(m) E[||M(m)||_F^2||z(m)||^2]$,
      \newline
      \newline
      $\leq \frac{LC_HC_G}{\lambda_c}\sum\limits_{m=\tau}^ta(m)$,
      \newline
      \newline
      $=C_4\sum\limits_{m=\tau}^ta(m) $.
      \newline where, $C_4 = \frac{LC_HC_G}{\lambda_c}$.
    \newpage
       Combining all the bounds obtained, we get,
     \newline
     $\sum\limits_{m=\tau}^tE[||\nabla J(\theta(m))||^2] \leq \frac{C_1}{a(t)} + C_2 \sqrt{\sum\limits_{m=\tau}^tE[||\nabla J(\theta(m))||^2]}\sqrt{\sum\limits_{m=\tau}^tE[||M(m)-M^*(m)||^2]} + $
     \newline$C_3\sqrt{\sum\limits_{m=\tau}^tE[||\nabla J(\theta(m))||^2]}\sqrt{\sum\limits_{m=\tau}^tE[||z(m)-z^*(m)||^2]} + C_4 \sum\limits_{m=\tau}^t a(m)$.
     \newline
     \newline
     Now substituting $a(t) = \frac{1}{(1+t)^\sigma}$
     \newline
     $\sum\limits_{m=\tau}^tE[||\nabla J(\theta(m))||^2] =C_1(1+t)^\sigma + C_2 \sqrt{\sum\limits_{m=\tau}^tE[||\nabla J(\theta(m))||^2]}\sqrt{\sum\limits_{m=\tau}^tE[||M(m)-M^*(m)||^2]} +$
     \newline$ C_3\sqrt{\sum\limits_{m=\tau}^tE[||\nabla J(\theta(m))||^2]}\sqrt{\sum\limits_{m=\tau}^tE[||z(m)-z^*(m)||^2]} + C_4 \sum\limits_{m=\tau}^t (1+m)^{-\sigma}$
     \newline
     \newline
     Using the bound obtained while proving Theorem 1,
     \newline
     $\sum\limits_{m=\tau}^tE[||\nabla J(\theta(m))||^2] \leq C_1(1+t)^\sigma + C_2 \sqrt{\sum\limits_{m=\tau}^tE[||\nabla J(\theta(m))||^2]}\sqrt{\sum\limits_{m=\tau}^tE[||M(m)-M^*(m)||^2]} +$
     \newline
     $ C_3\sqrt{\sum\limits_{m=\tau}^tE[||\nabla J(\theta(m))||^2]}\sqrt{\sum\limits_{m=\tau}^tE[||z(m)-z^*(m)||^2]} + \frac{C_4}{1-\sigma} (1+t-\tau)^\sigma$   
     \newline
     \newline
     Now dividing both the sides with $1 + t - \tau$, we get, 
     \newline
     $\frac{\sum\limits_{m=\tau}^tE[||\nabla J(\theta(m))||^2]}{1+t-\tau} \leq  \frac{C_1(1+t)^\sigma}{1+t-\tau}+C_2\sqrt{\frac{\sum\limits_{m=\tau}^tE[||\nabla J(\theta(m))||^2]}{1+t-\tau}}\sqrt{\frac{\sum\limits_{m=\tau}^tE[||M(m)-M^*(m)||^2]}{1+t-\tau}}+$
     \newline
     $C_3\sqrt{\frac{\sum\limits_{m=\tau}^tE[||\nabla J(\theta(m))||^2]}{1+t-\tau}}\sqrt{\frac{\sum\limits_{m=\tau}^tE[||z(m)-z^*(m)||^2]}{1+t-\tau}} + \frac{C_4}{(1-\sigma)(1+t-\tau)}(1+t-\tau)^{1-\sigma}$
    \newline
    \newline
    Let $F(t) \coloneq \frac{\sum\limits_{m=\tau}^tE[||\nabla J(\theta(m))||^2]}{1+t-\tau},$
    \newline
    $A(t) \coloneq \frac{C_1(1+t)^\sigma}{1+t-\tau} + \frac{C_4}{(1-\sigma)(1+t-\tau)}(1+t-\tau)^{1-\sigma},$
    \newline
    $Z(t) \coloneq [C_2\sqrt{\frac{\sum\limits_{m=\tau}^tE[||M(m)-M^*(m)||^2]}{1+t-\tau}} + C_3\sqrt{\frac{\sum\limits_{m=\tau}^tE[||z(m)-z^*(m)||^2]}{1+t-\tau}}]^2 ,$ and $B = \frac{1}{2}$. 
    \newline
    The above inequality can then be written as,
    \newline
    $F(t) \leq A(t) + 2B\sqrt{F(t)}\sqrt{Z(t)}$,
    \newline
    \newline
    $(\sqrt{F(t)} - \sqrt{Z(t)})^2 \leq A(t) + Z(t)$,
    \newline
    
    Now using, if $a\leq b+c$ and $a, b,c >0 \implies \sqrt{a} \leq \sqrt{b} + \sqrt{c}$, we get,
    \newline
    $\sqrt{F(t)} \leq \sqrt{A(t)} + 2 \sqrt{Z(t)},$
    \newline
    Using $a\leq (b+c) \implies a^2 \leq 2(b^2 + c^2)$, we get,
    \newline
    \newline
    $F(t) \leq 2A(t) + 8Z(t)$
    \newline
    \newline
    $F(t) \leq \frac{2C_1(1+t)^\sigma}{(1+t-\sigma)} + \frac{2C_4(1+t-\tau)^{1-\sigma}}{(1-\sigma)(1+t-\tau)} + 8[C_2\sqrt{\frac{\sum\limits_{m=\tau}^tE[||M(m)-M^*(m)||^2]}{1+t-\tau}} + C_3\sqrt{\frac{\sum\limits_{m=\tau}^tE[||z(m)-z^*(m)||^2]}{1+t-\tau}}]^2$,
    \newline
    \newline
    $F(t)\leq \frac{2C_1(1+t)^\sigma}{(1+t-\sigma)} + \frac{2C_4(1+t-\tau)^{1-\sigma}}{(1-\sigma)(1+t-\tau)} + \frac{16C_2^2\sum\limits_{m=\tau}^tE[||M(m)-M^*(m)||^2]}{1+t-\tau} + \frac{16C_2^2\sum\limits_{m=\tau}^tE[||z(m)-z^*(m)||^2]}{1+t-\tau}$
    \newline
    \newline
    Now using the bounds on the gradient and hessian updates, 
    \newline
    $F(t)\leq \mathcal{O}(t^{\sigma-1}) + \mathcal{O}(t^{-\sigma}) + \mathcal{O}(t^{\nu-1}) + \mathcal{O}(t^{-\nu}) + \mathcal{O}(t^{-2(\sigma-\nu)}) + \mathcal{O}(t^{\alpha-1}) + \mathcal{O}(t^{-\alpha}) + \mathcal{O}(t^{-2(\sigma-\alpha)})$
    \newline
    Using the condition on the step-sizes, we get,  
    $-\nu > -\alpha > -\sigma$, $\nu - 1 > \alpha -1 > \sigma-1$, and $-2(\sigma-\alpha) > -2(\alpha - \nu)$ therefore,
    \newline
    \newline
    $F(t) \leq \mathcal{O}(t^{\sigma-1}) + \mathcal{O}(t^{-\nu}) + \mathcal{O}(t^{-2(\sigma-\alpha)})$
    \newline
    \newline
    \newline
    Now using the fact that, $\min\limits_{0 \leq m \leq T}a(m) \leq \frac{\sum\limits_{m=0}^Ta(m))}{T+1}$
    \newline
    $\min\limits_{0 \leq t \leq T}E[||\nabla J(\theta(t))||^2] \leq \frac{\sum\limits_{m=0}^TE[||\nabla J(\theta(t))||^2]}{1+T}$
    \newline
    \newline
    And then using the bound obtained above proves Theorem 2,
    \newline
    $$\min\limits_{0 \leq t \leq T}E[||\nabla J(\theta(t))||^2] \leq  \mathcal{O}(T^{\sigma-1}) + \mathcal{O}(T^{-\nu}) + \mathcal{O}(T^{-2(\sigma-\alpha)}).$$
    \newline
    \newline

    \subsection*{A.8 Proof of theorem 3: Gradient estimation error}

    The proof structure of Theorem 3 is quite similar to that of Theorem 1, so we will mention just the major steps involved.
    \\
    Define
    \[
\epsilon(m) = ||z(m) - z^*(m)||,
\]
where $z^*(m) = \nabla J(\theta(m))$.
    \newline
    From the update, 

    \[
z(m+1)
= z(m) + b(m)\left[\frac{\eta_i(m)}{\beta}h(X_m) - z(m)\right],
\]
The recursive relation of error is, 

\begin{align*}
\epsilon(m)^2
&\le \frac{1}{2b(m)}\big(\epsilon(m)^2 - \epsilon(m+1)^2\big) \\
&\quad + \langle\epsilon(m), \xi(m) \rangle
+ \frac{1}{b(m)} \langle \epsilon(m), \Delta(m) \rangle \\
&\quad + b(m)||\xi(m)||^2
+ \frac{1}{b(m)}||\Delta(m)||^2.
\end{align*}

where, $\xi(m) = \frac{\eta_i(m)}{\beta}h(X_m)$ and 
$\Delta(m) = z^*(m) - z^*(m+1)$.

Let $F(t) = \sum\limits_{k=\tau}^tE[\epsilon(m)^2] $.
Now bounding all the terms on the RHS, we get the relation,

\begin{align*}
    F(t) \leq &\frac{C_1}{2a(m)} + C_2 \sqrt{F(t)}\\
        &+C_3\sqrt{F(t)}\sqrt{\sum\limits_{k=\tau}^t\frac{a(k)^2}{b(k)^2}} + C4(\sum\limits_{k=\tau}^tb(k)\big)\\
              &+C_5(\sum\limits_{k=\tau}^t\frac{a(k)^2}{b(k)}\big).
\end{align*}

Using the step-size $a(m) = \frac{1}{(1 + m)^\alpha} $ and 
$b(m) = \frac{1}{(1 + m)^\beta}$, 
\newline where $0<\alpha<\beta<1$ and applying algebra similar to  Theorem 1 we get,

\[F(t) \leq \mathcal{O}(t^\beta) + \mathcal{O}(t^{1-\beta}) + \mathcal{O}(t^{1 - 2(\alpha - \beta)}).\]

\subsection*{A.9  Proof of Theorem 4: Gradient Convergence for GSF1}

The proof structure of this theorem is also quite similar to Theorem 4, so here also we will be mentioning just the main steps involved.
\newline
We now derive a finite-time bound on
\[
G(t) := \sum_{m=\tau}^t \mathbb{E}\|\nabla J(\theta(m))\|^2.
\] for GSF1.

From Lipschitz continuity of the gradient (Lemma 1),
\[
J(\theta(m+1))
\le J(\theta(m))
+ \nabla J(\theta(m))^T (\theta(m+1)-\theta(m))
+ \frac{L_G}{2}\|\theta(m+1)-\theta(m)\|^2.
\]

From the update:
\[
\theta(m+1) = \theta(m) - a(m)z(m),
\]
\newline
Using Lipschitz continuity and update rule we get following relation,
\begin{align*}
    \sum_{m=\tau}^tE[||\nabla J(\theta(k)||^2] \leq &\sum_{m=\tau}^t E[\frac{J(\theta(k) - J(\theta(k+1)}{a(k)}] + \sum_{m=\tau}^t E[\langle\nabla J(\theta(k), z^*(k) - z(k) \rangle]]\\
    &+\sum_{m=\tau}^tE[\frac{L}{2}a(k)||z(k)||^2].
\end{align*}

Bounding each term on the RHS, we get,
\newline

\textbf{\underline{Term 1:-}}
This is similar to the Term 1 of Theorem 2, using the telescopic sum argument, we get the bound.
\[
\sum_{m=\tau}^tE[\frac{J(\theta(k) - J(\theta(k+1)}{a(k)}] \leq \frac{2C_1}{a(t)}.
\]

\textbf{\underline{Term 2:-}}
\newline
\begin{align*}
    \sum_{m=\tau}^t E[\langle\nabla J(\theta(k), z^*(k) - z(k) \rangle]] &\leq \sum_{m=\tau}^t  \sqrt{E||\nabla(J(\theta(k)||^2}*\sqrt{E||z^*(k) - z(k)||^2}\\
    &\leq\sqrt{\sum_{m=\tau}^t E||\nabla(J(\theta(k)||^2} \sqrt{\sum_{m=\tau}^t E||z^*(k) + z(k)||^2}.
\end{align*}

\textbf{\underline{Term 3:-}}
\newline
Using the boundedness of the gradient estimate, we get, 
\begin{align*}
   \sum_{m=\tau}^t E[\frac{L}{2}a(k)||z(k)||^2] &=  \sum_{m=\tau}^t\frac{L}{2}a(k)E[||z(k)||^2]\\
   &\leq C_3 \sum_{m=\tau}^t a(k). 
\end{align*}
\newline
Now combining all the bounds obtained and applying steps similar to Theorem 2,  we get,
 \begin{align*}
     \sum_{m=\tau}^tE[||\nabla J(\theta(k)||^2] \leq  &2C_1(1+t)^\alpha + \sqrt{\sum_{m=\tau}^t E||\nabla(J(\theta(k)||^2} \sqrt{\sum_{m=\tau}^t E||z^*(k) + z(k)||^2}\\
     &C_3\frac{(1 + t - \tau)^{1- \alpha}}{1-\alpha}.
 \end{align*}

 Using completing the sum argument as done in Theorem 2, we get, 
\begin{align*}
    \sum_{m=\tau}^tE[||\nabla J(\theta(k)||^2] \leq  \mathcal{O}(t^{\alpha-1}) + \mathcal{O}(t^{-\beta}) + \mathcal{O}(t^{-2(\alpha - \beta)}).
\end{align*}

\end{document}